\begin{document}

\title{Solving Support Vector Machines in Reproducing Kernel Banach Spaces with Positive Definite Functions}

\author{Gregory E. Fasshauer, Fred J. Hickernell and Qi Ye\thanks{Corresponding author}}

\date{}

\maketitle


\newtheorem{theorem}{Theorem}[section]
\newtheorem{lemma}[theorem]{Lemma}
\newtheorem{corollary}[theorem]{Corollary}
\newtheorem{proposition}[theorem]{Proposition}

\theoremstyle{definition}
\newtheorem{definition}{Definition}[section]
\newtheorem{example}{Example}[section]

\theoremstyle{remark}
\newtheorem{remark}{Remark}[section]

\numberwithin{equation}{section}
\numberwithin{figure}{section}
\numberwithin{table}{section}


\def\RR{\mathbb{R}}       
\def\Rd{\mathbb{R}^d}     
\def\vx{\boldsymbol{x}}   
\def\vy{\boldsymbol{y}}   
\def\vc{\boldsymbol{c}}
\def\NN{\mathbb{N}}       
\def\CC{\mathbb{C}}       
\def\PP{\mathbb{P}}

\def\ud{\mathrm{d}}       

\def\vu{\boldsymbol{u}}
\def\vv{\boldsymbol{v}}

\def\vf{\boldsymbol{f}}
\def\vg{\boldsymbol{g}}

\def\v0{\boldsymbol{0}}

\def\vl{\boldsymbol{l}}

\newcommand{\norm}[1]{\left\lVert#1\right\rVert}  
\newcommand{\abs}[1]{\left\lvert#1\right\rvert}   


\def\Banach{\mathcal{B}}
\def\Hilbert{\mathcal{H}}

\def\Cont{\mathrm{C}}

\def\Leb{\mathrm{L}}

\def\vA{\mathsf{A}}
\def\vI{\mathsf{I}}
\def\vJ{\mathsf{J}}
\def\vV{\mathsf{V}}
\def\vD{\mathsf{D}}

\def\Span{\mathrm{span}}
\def\Sign{\mathrm{Sign}}
\def\Space{\mathcal{N}}
\def\Fun{\mathcal{F}}

\def\Real{\mathcal{R}e}

\def\Fourier{\mathcal{F}}

\def\vb{\boldsymbol{b}}
\def\vphi{\boldsymbol{\phi}}
\def\vk{\boldsymbol{k}}

\def\SI{\mathcal{SI}}

\def\Domain{\Omega}

\def\Schwartz{\mathscr{S}}

\def\Borel{\mathscr{B}}

\def\Fourier{\mathcal{F}}

\def\mT{\mathcal{T}}

\def\Ker{\mathcal{K}}



\begin{abstract}

In this paper we solve support vector machines in reproducing kernel Banach spaces with reproducing kernels defined on nonsymmetric domains instead of the traditional methods in reproducing kernel Hilbert spaces. Using the orthogonality of semi-inner-products, we can obtain the explicit representations of the dual (normalized-duality-mapping) elements of support vector machine solutions. In addition, we can introduce the reproduction property in a generalized native space by Fourier transform techniques such that it becomes a reproducing kernel Banach space, which can be even embedded into Sobolev spaces,  and its reproducing kernel is set up by the related positive definite function.
The representations of the optimal solutions of support vector machines (regularized empirical risks) in these reproducing kernel Banach spaces are formulated explicitly in terms of positive definite functions, and their finite numbers of coefficients can be computed by fixed point iteration.
We also give some typical examples of reproducing kernel Banach spaces induced by Mat\'ern functions (Sobolev splines) so that their support vector machine solutions are well computable as the classical algorithms. Moreover, each of their reproducing bases includes information from multiple training data points.
The concept of reproducing kernel Banach spaces offers us a new numerical tool for solving support vector machines.

\end{abstract}

{\small \textbf{Keywords: } support vector machine, regularized empirical risk, reproducing kernel Banach space, reproducing kernel, positive definite function, Fourier transform, fixed point iteration, Sobolev space, Mat\'ern function, Sobolev-spline kernel.}

\section{Introduction}

The theory and practice of kernel-based methods is a fast growing research area. They have been used for both scattered data approximation and machine learning. Applications come from such different fields as physics, biology, geology, meteorology and finance. The books~\cite{Buhmann2003,Fasshauer2007,Wahba1990,Wendland2005} show how to use (conditionally) positive definite kernels to construct interpolants for observation data sampled from some unknown functions in the native spaces induced by the kernel functions. In the books~\cite{Alpaydin2010,SteinwartChristmann2008}, the optimal support vector machine solutions are obtained in reproducing kernel Hilbert spaces (RKHSs), and these solutions are formulated in terms of the related reproducing kernels and given data values. Actually, as long as the same inner product is used, the concepts of native spaces and RKHSs are interchangeable. It is just that researchers in numerical analysis and statistical learning use different terminology and techniques to introduce those spaces. Moreover, the recent contributions~\cite{FasshauerYe2011Dist,FasshauerYe2011DiffBound,YePhD2012} develop a clear and detailed
framework for generalized Sobolev spaces and RKHSs by establishing a connection between Green functions and reproducing kernels.

Related to the current research work, \cite{Erickson2007,EricksonFasshauer2007,ZhangXuZhang2009} all generalize classical native spaces (RKHSs) to Banach spaces in different ways.
However, the reproducing property in generalized native spaces is not discussed in \cite{Erickson2007,EricksonFasshauer2007}, and \cite{ZhangXuZhang2009} does not mention how to use reproducing kernels to introduce the explicit forms of their reproducing kernel Banach spaces (RKBSs) analogous to the typical cases of RKHSs induced by Gaussian kernels and Sobolev-spline kernels, etc. Using \cite{ZhangXuZhang2009} it is therefore difficult to obtain explicit and simple support vector machine (SVM) solutions and perform practical computations. Following the results of these earlier authors, \cite[Section~6]{YePhD2012} tries to combine both of these ideas, and uses Fourier transform techniques to construct RKBSs.

In this paper we want to complete and extend the theoretical results in \cite[Section~6]{YePhD2012}. In addition, the RKBS given in Definition~\ref{d:RKBS} is different from that of~\cite{ZhangXuZhang2009}. Our RKBS can be \emph{one-sided} or \emph{two-sided} and its reproducing kernel $K$ can be defined on nonsymmetric domains, i.e., $K:\Domain_2\times\Domain_1\to\CC$, where $\Domain_1$ and $\Domain_2$ can be various subsets of $\RR^{d_1}$ and $\RR^{d_2}$, respectively (see Definition~\ref{d:RKBS}). Our RKBS is an extension of the RKHS and it does not require the reflexivity condition. The RKBS defined in~\cite{ZhangXuZhang2009} can be seen as a special case of the RKBS defined in this paper. According to Lemma~\ref{l:RKBS-opt-rep}, we can still obtain the optimal solution in the one-sided RKBS using the techniques of semi-inner-products.

It is well known that for given training data $D:=\left\{(\vx_j,y_j)\right\}_{j=1}^N$ the classical SVM (regularized empirical risk) in the RKHS $\Hilbert$ has the form
\[
\min_{f\in\Hilbert}\sum_{j=1}^NL\left(\vx_j,y_j,f(\vx_j)\right)+R\left(\norm{f}_{\Hilbert}\right),
\]
where $L$ is a \emph{loss function} and $R$ is a \emph{regularization function} (see Theorem~\ref{t:RKHS-opt-rep}).
In the same way we are able to apply an optimal recovery of RKBSs to solve SVMs in RKBSs.
Theorem~\ref{t:RKBS-opt-rep} establishes that the SVM in the right-sided RKBS $\Banach$ with the reproducing kernel $K:\Domain_2\times\Domain_1\to\CC$ based on the training data $D\subseteq\Domain_1\times\CC$ satisfies
\[
\min_{f\in\Banach}\sum_{j=1}^NL\left(\vx_j,y_j,f(\vx_j)\right)+R\left(\norm{f}_{\Banach}\right).
\]
Moreover, this problem has a unique optimal solution $s_{D,L,R}$ and its dual (normalized-duality-mapping) element $s_{D,L,R}^{\ast}$ is a linear combination of the reproducing kernel centered at the training data points $\left\{\vx_1,\ldots,\vx_N\right\}\subseteq\Domain_1$, i.e.,
\[
s_{D,L,R}^{\ast}(\vx)=\sum_{k=1}^Nc_kK(\vx,\vx_k),\quad \vx\in\Domain_2.
\]
According to Corollary~\ref{c:RKBS-opt-coef-fixed-point}, the coefficient vector $\vc:=\left(c_1,\cdots,c_N\right)^T$ of $s_{D,L,R}^{\ast}$ is a fixed point of the function $F_{D,L,R}^{\ast}:\RR^N\to\RR^N$ dependent of the differential loss function $L$ and the differential regularization function $R$, i.e., $F_{D,L,R}^{\ast}(\vc)=\vc$.
From this it is obvious that the SVM in the RKBS is the generalization of the classical method in the RKHS.

In Section~\ref{s:RKBS-PDF}, we show how to use a positive definite function $\Phi$ to set up different RKBSs $\Banach_{\Phi}^p(\Rd)$ and $\Banach_{\Phi}^p(\Domain)$ with $p>1$ whose two-sided reproducing kernel is given by $K(\vx,\vy)=\Phi(\vx-\vy)$ (see Theorems~\ref{t:RKBS-PDF} and~\ref{t:RKBS-PDF-Omega}). We can observe that $\Banach_{\Phi}^p(\Rd)$ is a kind of generalized native space. Furthermore, $\Banach_{\Phi}^p(\Rd)$ and $\Banach_{\Phi}^p(\Domain)$ coincide with the definition of RKBSs given in~\cite{ZhangXuZhang2009}.
The SVM solution $s_{D,L,R}$ in $\Banach_{\Phi}^p(\Rd)$ can be represented by the positive definite function $\Phi$, which means that we can obtain an explicit formula for the SVM solution $s_{D,L,R}$ in $\Banach_{\Phi}^p(\Rd)$ (see Theorem~\ref{t:RKBS-PDF-opt-rep}).
Corollary~\ref{c:RKBS-PDF-opt-coef-fixed-point} shows that the finite dimensional coefficients of the SVM solution $s_{D,L,R}$ can even be obtained by solving a fixed point iteration problem for differentiable loss functions and regularization functions.
Theorem~\ref{t:RKBS-PDF-Omega} and Corollary~\ref{c:RKBS-PDF-Omega} give some examples of reproducing kernels defined on nonsymmetric domains.
Corollary~\ref{c:RKBS-PDF-Sobolev} and~\ref{c:RKBS-PDF-Sobolev-Omega} provide that RKBSs can be embedded into Sobolev spaces for some special reproducing kernels, e.g., Sobolev-spline kernels (Mat\'ern functions).

The Mat\'ern functions represent a fast growing research area which has frequent applications in approximation theory and statistical learning,
and moreover, they are positive definite functions and (full-space) Green functions (see \cite{Fasshauer2007,FasshauerYe2011Dist,Stein1999,YePhD2012}).
In Section~\ref{s:Matern}, we solve the SVMs in the RKBSs of Mat\'ern functions. If $G_{\theta,n}$ is the Mat\'ern function with parameter $\theta>0$ and degree $n>3d/2$ then, according to our theoretical results, $\Banach_{G_{\theta,n}}^2(\Rd)$ is an RKHS, while $\Banach_{G_{\theta,n}}^4(\Rd)$ is only an RKBS. Their reproducing kernels, however, are the same Sobolev-spline kernel $K_{\theta,n}(\vx,\vy):=G_{\theta,n}(\vx-\vy)$. It is well known that the SVM solution in $\Banach_{G_{\theta,n}}^2(\Rd)\equiv\Hilbert_{G_{\theta,n}}(\Rd)$ has the explicit expression
\[
s_{D,L,R}(\vx):=\sum_{k=1}^Nc_kK_{\theta,n}(\vx,\vx_k),\quad \vx\in\Rd,
\]
(see Theorem~\ref{t:RKHS-opt-rep}). In this paper we discover a new fact that the SVM solution in $\Banach_{G_{\theta,n}}^4(\Rd)$ also has an explicit form, namely
\[
s_{D,L,R}(\vx)
=\sum_{k_1,k_2,k_3=1}^{N,N,N}c_{k_1}\overline{c_{k_2}}c_{k_3}\Ker_{\theta,3n}\left(\vx,\vx_{k_1},\vx_{k_2},\vx_{k_3}\right),
\quad \vx\in\Rd,
\]
where $\Ker_{\theta,3n}(\vx,\vy_1,\vy_2,\vy_3):=G_{\theta,3n}(\vx-\vy_1+\vy_2-\vy_3)$.
Section~\ref{s:Matern} shows that several other explicit representations of SVM solutions in the RKBS $\Banach_{G_{\theta,n}}^{p}(\Rd)$ are easily computable when $p$ is an even number. This discovery could lead to a new numerical tool for SVMs.

For the binary classification problems, it is well-known that the classical hinge loss is designed to maximize the $2$-norm margins by using the linear functions. However, we can not employ the hinge loss to set up the SVMs in order to maximize other $p$-norm margins. We guess that for applications to the problems that arise in current practice it will be necessary to construct loss functions depending on different kinds of RKBSs.

\begin{remark}\label{r:correction-mistake}
In this paper, the third author hopes to correct a mistake concerning the optimal recovery of RKBS $\Banach_{\Phi}^p(\Rd)$ mentioned in \cite[Section~6.2]{YePhD2012}. Theorem~\ref{t:RKBS-PDF-opt-rep} is the correction of \cite[Theorem~6.5]{YePhD2012}, which was the result of a misconception that the normalized duality mapping is linear. The main ideas and techniques used in the corrected version below are still the same as in~\cite{YePhD2012}. An updated version of~\cite{YePhD2012} has been posted on Ye's webpage.
\end{remark}

\section{Banach Spaces}\label{s:Banach}

In this section, we review some classical theoretical results for Banach spaces from \cite{Giles1967,James1947,Lumer1961,Megginson1998}. We denote the \emph{dual space} (the collection of all bounded linear functionals) of a Banach space $\Banach$ by $\Banach'$ and its \emph{dual bilinear product} as $\langle \cdot,\cdot \rangle_{\Banach}$, i.e.,
\[
\langle f,T \rangle_{\Banach}:=T(f), \quad \text{for all }T\in\Banach'\text{ and all }f\in\Banach.
\]
\cite[Theorem~1.10.7]{Megginson1998} states that $\Banach'$ is also a Banach space.

If the Banach spaces $\Banach_1$ and $\Banach_2$ are \emph{isometrically isomorphic} (equivalent), i.e., $\Banach_1\equiv\Banach_2$, then we can think of both spaces as being identical in the sense that their norms and their elements can be seen to be the same in both spaces (see \cite[Definition~1.4.13]{Megginson1998}). We say that $\Banach_1$ is \emph{embedded into} $\Banach_2$ if there exists a positive constant $C$ such that $\norm{f}_{\Banach_2}\leq C\norm{f}_{\Banach_1}$ for all $f\in\Banach_1\subseteq\Banach_2$ (see \cite[Section~1.25]{AdamsFournier2003}).

If the Banach space $\Banach$ is \emph{reflexive} (see \cite[Definition~1.11.6]{Megginson1998}), then we have $\Banach''\equiv\Banach$ and $\langle f,g \rangle_{\Banach}=\langle g,f \rangle_{\Banach'}$ for all $f\in\Banach$ and all $g\in\Banach'$. For example, the function space $\Leb_p(\Domain;\mu)$ defined on the positive measure space $(\Domain,\Borel_{\Domain},\mu)$ is a reflexive Banach space and its dual space is isometrically equivalent to $\Leb_q(\Domain;\mu)$ where $p,q>1$ and $p^{-1}+q^{-1}=1$ (see \cite[Example~1.10.2 and Theorem~1.11.10]{Megginson1998}). For the complex situation, the isometric isomorphism from $\Leb_p(\Domain;\mu)'$ onto $\Leb_q(\Domain;\mu)$ is antilinear.

We say that $\Banach$ is \emph{uniformly convex} if, for every $\epsilon>0$, there is $\delta>0$ such that
\[
\norm{\frac{f+g}{2}}_{\Banach}\leq 1-\delta,\text{ whenever }\norm{f}_{\Banach}=\norm{g}_{\Banach}=1\text{ and }\norm{f-g}_{\Banach}\geq\epsilon
\]
(see \cite[Definition~5.2.1]{Megginson1998}). According to \cite[Definition~5.4.1,~5.4.15 and Corollary~5.4.18]{Megginson1998},
$\Banach$ is said to be \emph{smooth} or \emph{G\^{a}teaux differentiable} if
\[
\lim_{\lambda\to0}{\frac{\norm{f+\lambda g}_{\Banach}-\norm{f}_{\Banach}}{\lambda}}\text{ exists },\quad\text{for all }f,g\in\Banach.
\]
A typical case is that $\Leb_p(\Domain;\mu)$ is uniformly convex and smooth if $1<p<\infty$.

It is well known that we can discuss the orthogonality in Banach spaces with a more general axiom system than that in Hilbert spaces. The papers~\cite{Giles1967,James1947,Lumer1961} show that every Banach space can be represented as a semi-inner-product space in order that the theories of Banach space can be penetrated by Hilbert space type arguments. A \emph{semi-inner-product} $[\cdot,\cdot]_{\Banach}:\Banach\times\Banach\to\CC$ defined on a Banach space $\Banach$ is given by
\begin{align*}
&(i)~[f+g,h]_{\Banach}=[f,h]_{\Banach}+[g,h]_{\Banach},\quad
(ii)~[f,f]_{\Banach}=\norm{f}_{\Banach}^2,\\
&(iii)~[\lambda f,g]_{\Banach}=\lambda[f,g]_{\Banach},~ [f,\lambda g]_{\Banach}=\overline{\lambda}[f,g]_{\Banach},\quad
(iv)~\abs{[f,g]_{\Banach}}\leq [f,f]_{\Banach}[g,g]_{\Banach},
\end{align*}
for all $f,g,h\in\Banach$ and all $\lambda\in\CC$. However, Hermitian symmetry of the semi-inner-product may not hold, i.e., $[f,g]_{\Banach}\neq\overline{[g,f]}_{\Banach}$. This indicates that the generality of the semi-inner-product in Banach space is a serious limitation for any extensive development that parallels the inner product of Hilbert space.

For example, a semi-inner-product of $\Leb_p(\Domain;\mu)$ with $1<p<\infty$ is given by
\[
[g,f]_{\Leb_p(\Domain;\mu)}=\frac{1}{\norm{f}_{\Leb_p(\Domain;\mu)}^{p-2}}\int_{\Domain}g(\vx)\overline{f(\vx)}\abs{f(\vx)}^{p-2}\ud\mu(\vx),
\quad \text{for all }f,g\in\Leb_p(\Domain;\mu),
\]
(see examples in \cite{Giles1967,James1947}).

We say that $f$ is \emph{orthogonal} to $g$ in a Banach space $\Banach$ if
\[
\norm{f+\lambda g}_{\Banach}\geq\norm{f}_{\Banach},\quad \text{for all }\lambda\in\CC,
\]
(see the definitions in \cite{Giles1967,James1947}). Suppose that the Banach space $\Banach$ is smooth. Using \cite[Theorem~2]{Giles1967}, we can determine that $f$ is orthogonal to $g$ if and only if $f$ is \emph{normal to} $g$, i.e.,
\[
[g,f]_{\Banach}=0.
\]

We can also obtain a representation theorem in Banach space by an adaptation of the representation theorem in Hilbert space.
Suppose that the Banach space $\Banach$ is uniformly convex and smooth.
According to \cite[Theorem~3 and 6]{Giles1967}, for every bounded linear functional $T\in\Banach'$, there exists a unique $f\in\Banach$ such that
\[
T(g)=\langle g,T \rangle_{\Banach}=[g,f]_{\Banach},\quad \text{for all }g\in\Banach,
\]
and $\norm{T}_{\Banach'}=\norm{f}_{\Banach}$. This mapping is also surjective. We call $T$ the \emph{normalized-duality-mapping element} of $f$ and rewrite it as $f^{\ast}:=T$. For convenience we simplify normalized-duality-mapping element to dual element in this paper.
The normalized duality mapping
is a one-to-one and norm-preserving mapping from $\Banach$ onto $\Banach'$. Note that this mapping is usually nonlinear. According to \cite[Theorem~7]{Giles1967}, the semi-inner-product of $\Banach'$ has the form $[f^{\ast},g^{\ast}]_{\Banach'}=[g,f]_{\Banach}$ for all $f^{\ast},g^{\ast}\in\Banach'$.
For example, the dual element of $f\in\Leb_p(\Domain;\mu)$ with $1<p<\infty$ is given by
\[
f^{\ast}=\frac{f(\vx)\abs{f(\vx)}^{p-2}}{\norm{f}_{\Leb_p(\Domain;\mu)}^{p-2}}\in\Leb_q(\Domain;\mu),
\]
where $q$ is the conjugate exponent of $p$.
Let $\Space$ be a subset of $\Banach$. We can check that $f$ is orthogonal to $\Space$ if and only if its dual element $f^{\ast}\in\Space^{\perp}=\left\{\eta\in\Banach':~\langle h,\eta \rangle_{\Banach}=0,\text{ for all }h\in\Space\right\}$, i.e.,
\[
[h,f]_{\Banach}=\langle h,f^{\ast} \rangle_{\Banach}=0,\quad \text{for all }h\in\Space.
\]

\section{Reproducing Kernels and Reproducing Kernel Hilbert Spaces}\label{s:RK-RKHS}

Most of the material presented in this section can be found in the monographs~\cite{Fasshauer2007,SteinwartChristmann2008,Wendland2005}. For the reader's convenience we repeat here what is essential to our discussion later on.

\begin{definition}[{\cite[Definition~10.1]{Wendland2005}}]\label{d:RKHS}
Let $\Domain\subseteq\Rd$ and $\Hilbert$ be a Hilbert space consisting of functions
$f:\Domain\to\CC$. $\Hilbert$ is called a \emph{reproducing kernel Hilbert space} (RKHS) and a kernel function $K:\Domain\times\Domain\to\CC$ is
called a \emph{reproducing kernel} for $\Hilbert$ if
\[
(i) \ K(\cdot,\vy)\in\Hilbert \text{ and } (ii) \ f(\vy)=(f,K(\cdot,\vy))_{\Hilbert},\quad\text{for
all }f\in\Hilbert\text{ and all }\vy\in\Domain,
\]
where $(\cdot,\cdot)_{\Hilbert}$ is used to denote the inner product of $\Hilbert$.
\end{definition}

\begin{remark}
In order to simplify our discussion and proofs, we let all kernel functions be complex-valued and all function spaces be composed of complex-valued functions in this paper. According to \cite[Proposition~1.9.3]{Megginson1998}, it is not difficult for us to restrict the theoretical results to real kernel functions and function spaces.
\end{remark}

\subsection{Optimal Recovery in Reproducing Kernel Hilbert Spaces}\label{s:opt-RKHS}

\begin{theorem}[{Representer theorem~\cite[Theorem~5.5]{SteinwartChristmann2008}}]\label{t:RKHS-opt-rep}
Let $\Hilbert$ be a reproducing kernel Hilbert space with a reproducing kernel $K$ defined on $\Domain\subseteq\Rd$, and a regularization function $R:[0,\infty)\to[0,\infty)$ be convex and strictly increasing. We choose the loss function $L:\Domain\times\CC\times\CC\to[0,\infty)$ such that $L(\vx,y,\cdot)$ is a convex map for any fixed $\vx\in\Domain$ and any fixed $y\in\CC$.
Given the data $D:=\left\{\left(\vx_1,y_1\right),\ldots,\left(\vx_N,y_N\right)\right\}$, with pairwise distinct data points $X=\left\{\vx_1,\ldots,\vx_N\right\}\subseteq\Domain$ and associated data values $Y=\left\{y_1,\ldots,y_N\right\}\subset\CC$, the optimal solution (support vector machine solution) $s_{D,L,R}$ of
\[
\min_{f\in\Hilbert}\sum_{j=1}^NL\left(\vx_j,y_j,f(\vx_j)\right)+R\left(\norm{f}_{\Hilbert}\right),
\]
has the explicit representation
\[
s_{D,L,R}(\vx)=\sum_{k=1}^Nc_kK(\vx,\vx_k),\quad \vx\in\Domain,
\]
for some coefficients $c_1,\ldots,c_N\in\CC$.
\end{theorem}

\subsection{Constructing Reproducing Kernel Hilbert Spaces by Positive Definite Functions}\label{s:RKHS-PDF}

\begin{definition}[{\cite[Definition~6.1]{Wendland2005}}]\label{d:PDF}
A continuous even function $\Phi:\Rd\to\CC$ is called \emph{positive definite} if, for all $N\in\NN$ and
all sets of pairwise distinct centers $X=\left\{\vx_1,\ldots,\vx_N\right\}\subset\Rd$, the quadratic form
\[
\sum_{j,k=1}^{N,N}c_j\overline{c_k}\Phi(\vx_j-\vx_k)=\vc^{\ast}\vA_{\Phi,X}\vc>0,\quad \text{for all }\vc\in\CC^N\backslash\{\v0\}.
\]
Here the interpolation matrix $\vA_{\Phi,X}:=\left(\Phi(\vx_j-\vx_k)\right)_{j,k=1}^{N,N}\in\CC^{N\times N}$ and $\vc^{\ast}=\overline{\vc}^T$.
\end{definition}

We say $\Phi$ is even if $\Phi(\vx)=\overline{\Phi(-\vx)}$. This shows that $\Phi$ is a positive definite function if and only if $\vA_{\Phi,X}$ is a positive definite matrix for any pairwise distinct finite set $X$ of data points in $\Rd$. The application and history of positive definite functions can be seen in the review paper~\cite{Fasshauer2011}. \cite[Section~10.2]{Wendland2005} shows how to use positive definite functions to construct RKHSs.

\begin{theorem}[{\cite[Theorem~6.11]{Wendland2005}}]\label{t:fourier-PDF}
Suppose that $\Phi\in\Cont(\Rd)\cap\Leb_1(\Rd)$. Then $\Phi$ is positive definite if and only if $\Phi$ is bounded and its Fourier transform $\hat{\Phi}$ is nonnegative and nonvanishing (nonzero everywhere).
\end{theorem}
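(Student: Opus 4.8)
The plan is to prove the two implications separately, using the Fourier inversion formula to convert the quadratic form of Definition~\ref{d:PDF} into a frequency-domain integral. Throughout I adopt Wendland's normalization $\hat{\Phi}(\vu)=(2\pi)^{-d/2}\int_{\Rd}\Phi(\vx)e^{-i\vx^T\vu}\,\ud\vu$; since $\Phi\in\Leb_1(\Rd)$, the transform $\hat{\Phi}$ is automatically continuous and bounded, so the content lies entirely in the positivity and nonvanishing statements.

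For the direction assuming $\hat{\Phi}\ge 0$ and nonvanishing, the first task is to justify the inversion $\Phi(\vx)=(2\pi)^{-d/2}\int_{\Rd}\hat{\Phi}(\vu)e^{i\vx^T\vu}\,\ud\vu$, which requires $\hat{\Phi}\in\Leb_1(\Rd)$. This is the step I expect to be the main obstacle. I would get it by a Gaussian mollification: for $\epsilon>0$ the multiplication (Parseval) formula gives $\int_{\Rd}\hat{\Phi}(\vu)e^{-\epsilon\abs{\vu}^2}\,\ud\vu=(\Phi\ast g_{\epsilon})(\v0)$ for an appropriate Gaussian $g_{\epsilon}$, and continuity of $\Phi$ forces the right-hand side to converge to $(2\pi)^{d/2}\Phi(\v0)$ as $\epsilon\to 0$; since $\hat{\Phi}\ge 0$, monotone convergence applied to the left-hand side then yields $\int_{\Rd}\hat{\Phi}(\vu)\,\ud\vu=(2\pi)^{d/2}\Phi(\v0)<\infty$, so $\hat{\Phi}\in\Leb_1(\Rd)$ and the inversion holds pointwise by continuity. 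Substituting it into the quadratic form produces the identity
\[
\sum_{j,k=1}^{N,N}c_j\overline{c_k}\Phi(\vx_j-\vx_k)=(2\pi)^{-d/2}\int_{\Rd}\hat{\Phi}(\vu)\Bigl|\sum_{j=1}^N c_j e^{i\vx_j^T\vu}\Bigr|^2\,\ud\vu.
\]
Because $\hat{\Phi}>0$ everywhere and the exponential sum $\sum_j c_j e^{i\vx_j^T\vu}$ is a nontrivial real-analytic function of $\vu$ whenever the $\vx_j$ are pairwise distinct and $\vc\ne\v0$ (so its zero set has Lebesgue measure zero), the integrand is strictly positive almost everywhere; hence the integral is strictly positive and $\Phi$ is positive definite.

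For the converse, assume $\Phi$ is positive definite. Boundedness follows immediately from the $2\times 2$ Hermitian principal submatrix at the centers $\{\v0,\vx\}$: its positive definiteness forces $\Phi(\v0)^2-\abs{\Phi(\vx)}^2\ge 0$, using $\Phi(-\vx)=\overline{\Phi(\vx)}$, whence $\abs{\Phi(\vx)}\le\Phi(\v0)$. To obtain $\hat{\Phi}\ge 0$, I would first upgrade the discrete inequality of Definition~\ref{d:PDF} to its integral form $\iint_{\Rd\times\Rd}\Phi(\vx-\vy)\psi(\vx)\overline{\psi(\vy)}\,\ud\vx\,\ud\vy\ge 0$ for every $\psi$ in a dense test class, by approximating the double integral with Riemann sums and invoking the continuity and integrability of $\Phi$. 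By Parseval this double integral equals a positive multiple of $\int_{\Rd}\hat{\Phi}(\vu)\abs{\hat{\psi}(\vu)}^2\,\ud\vu$; choosing $\psi$ so that $\abs{\hat{\psi}}^2$ forms an approximate identity concentrating at an arbitrary frequency $\vu_0$ and using the continuity of $\hat{\Phi}$ then forces $\hat{\Phi}(\vu_0)\ge 0$. The nonvanishing assertion is the delicate point, and I would argue it by contraposition through the displayed identity: a vanishing of $\hat{\Phi}$ on a set large enough to be seen by the measure (an open set, by continuity) would permit a choice of pairwise distinct centers and coefficients $\vc\ne\v0$ collapsing the right-hand integral to zero, contradicting strict positive definiteness; matching this condition cleanly to nonvanishing everywhere is exactly where the real-analyticity of the exponential sums, already exploited above, does the decisive work.
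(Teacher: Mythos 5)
The paper offers no proof of this statement---it is quoted directly from Wendland---so the only question is whether your argument stands on its own. Everything up to the last step does: the Gaussian-mollification proof that $\hat{\Phi}\in\Leb_1(\Rd)$, the conversion of the quadratic form into $(2\pi)^{-d/2}\int_{\Rd}\hat{\Phi}(\vu)\bigl|\sum_{j}c_je^{i\vx_j^T\vu}\bigr|^2\,\ud\vu$, the strict positivity via the null zero set of a nontrivial exponential sum, the $2\times2$-minor bound $\abs{\Phi(\vx)}\le\Phi(\v0)$, and the Riemann-sum/Parseval argument for $\hat{\Phi}\ge0$ are all correct and standard.

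The final step---necessity of ``nonvanishing'' in the sense of nonzero \emph{everywhere}---is where the proof breaks, and it cannot be repaired, because that implication is false. Your contrapositive already misfires logically: continuity makes the zero set of $\hat{\Phi}$ \emph{closed}, not open, so a single zero does not hand you an open set of zeros; and even if $\hat{\Phi}$ did vanish on an open set, your own real-analyticity observation shows that $\sum_{j}c_je^{i\vx_j^T\vu}$ cannot vanish on the complementary set of positive measure, so the integral can never be collapsed to zero this way. Concretely, take $\hat{\Phi}(\vu):=\norm{\vu}_2^2e^{-\norm{\vu}_2^2}$: the corresponding $\Phi$ is Schwartz class, hence in $\Cont(\Rd)\cap\Leb_1(\Rd)$ and bounded, and the displayed identity shows it is strictly positive definite (the integrand vanishes only on a null set), yet $\hat{\Phi}(\v0)=0$. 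What your method actually proves---and what is true---is that positive definiteness forces $\hat{\Phi}\ge0$ and $\hat{\Phi}\not\equiv0$ (since $\Phi(\v0)>0$), and conversely that $\hat{\Phi}\ge0$ with $\hat{\Phi}\not\equiv0$ already yields strict positive definiteness, because $\hat{\Phi}$ is then positive on some open set, on which no nontrivial exponential sum can vanish identically. So the honest conclusion of your argument is the characterization with ``not identically zero'' in place of ``nonzero everywhere''; the stronger gloss in the quoted statement should be treated only as a sufficient condition.
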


\begin{remark}
In this paper, the Fourier transform of $f\in\Leb_1(\Rd)$ is defined by
\[
\hat{f}(\vx):=(2\pi)^{-d/2}\int_{\Rd}f(\vy)e^{-i\vx^T\vy}\ud\vy,
\]
where $i$ is the imaginary unit, i.e., $i^2=-1$.
\end{remark}

\begin{theorem}[{\cite[Theorem~10.12]{Wendland2005}}]\label{t:RKHS-PDF}
Suppose that $\Phi\in\Cont(\Rd)\cap\Leb_1(\Rd)$ is a positive definite function.
Then the space
\[
\Hilbert_{\Phi}(\Rd):=\left\{f\in\Leb_2(\Rd)\cap\Cont(\Rd):\hat{f}\big/\hat{\Phi}^{1/2}\in\Leb_2(\Rd)\right\},
\]
equipped with the norm
\[
\norm{f}_{\Hilbert_{\Phi}(\Rd)}:=\left((2\pi)^{-d/2}\int_{\Rd}\frac{\abs{\hat{f}(\vx)}^2}{\hat{\Phi}(\vx)}\ud\vx\right)^{1/2}
\]
is a reproducing kernel Hilbert space (native space) with reproducing kernel given by
\[
K(\vx,\vy):=\Phi(\vx-\vy),\quad \vx,\vy\in\Rd,
\]
where $\hat{\Phi}$ and $\hat{f}$ are the Fourier transforms of $\Phi$ and $f$, respectively. The inner product in $\Hilbert_{\Phi}(\Rd)$ has the form
\[
(f,g)_{\Hilbert}=(2\pi)^{-d/2}\int_{\RR}\frac{\hat{f}(\vx)\overline{\hat{g}(\vx)}}{\hat{\Phi}(\vx)}\ud\vx,\quad f,g\in\Hilbert_{\Phi}(\Rd).
\]
\end{theorem}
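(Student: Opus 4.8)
The plan is to realize $\Hilbert_{\Phi}(\Rd)$ as an isometric copy of $\Leb_2(\Rd)$ through a weighted Fourier transform, and then to verify the two reproducing-kernel axioms of Definition~\ref{d:RKHS} by directly computing the Fourier transform of $K(\cdot,\vy)=\Phi(\cdot-\vy)$. Because Theorem~\ref{t:fourier-PDF} guarantees that $\hat{\Phi}$ is nonnegative and nowhere vanishing, the weight $\hat{\Phi}^{1/2}$ is a well-defined, strictly positive function, so the linear map $\iota\colon f\mapsto\hat{f}/\hat{\Phi}^{1/2}$ carries $\Hilbert_{\Phi}(\Rd)$ into $\Leb_2(\Rd)$ and is norm-preserving by the very definition of $\norm{\cdot}_{\Hilbert_{\Phi}(\Rd)}$.

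First I would isolate the one genuinely technical fact on which everything rests, namely that $\hat{\Phi}\in\Leb_1(\Rd)$ with $\int_{\Rd}\hat{\Phi}(\vx)\,\ud\vx=(2\pi)^{d/2}\Phi(0)$. I expect this to be the main obstacle, since $\Phi\in\Leb_1(\Rd)$ alone yields only a bounded continuous $\hat{\Phi}$, not an integrable one; the resolution must exploit the sign of $\hat{\Phi}$. Testing against the Gaussian family $g_\epsilon(\vx)=e^{-\epsilon\norm{\vx}^2/2}$ and using the multiplication formula $\int_{\Rd}\hat{\Phi}\,g_\epsilon\,\ud\vx=\int_{\Rd}\Phi\,\hat{g_\epsilon}\,\ud\vx$, the right-hand side tends to $(2\pi)^{d/2}\Phi(0)$ as $\epsilon\to0$ because $\hat{g_\epsilon}$ is a constant multiple of an approximate identity and $\Phi$ is continuous at the origin, while monotone convergence applied to the left-hand side --- legitimate precisely because $\hat{\Phi}\geq0$ and $g_\epsilon\uparrow1$ --- forces $\hat{\Phi}$ to be integrable with the asserted value.

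With $\hat{\Phi}\in\Leb_1(\Rd)$ in hand, the structural claims are routine. The expression $(f,g)_{\Hilbert}=(2\pi)^{-d/2}\int_{\Rd}\hat{f}\,\overline{\hat{g}}/\hat{\Phi}\,\ud\vx$ is well defined because, by the Cauchy--Schwarz inequality in $\Leb_2(\Rd)$, its integrand is the product of $\iota f$ and $\overline{\iota g}$ with $\iota f,\iota g\in\Leb_2(\Rd)$; it is definite since $\hat{\Phi}>0$ makes $(f,f)_{\Hilbert}=0$ force $\hat{f}=0$ and hence $f=0$; and it manifestly induces $\norm{\cdot}_{\Hilbert_{\Phi}(\Rd)}$. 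Completeness follows because $\iota$ is in fact a surjective isometry onto $\Leb_2(\Rd)$: surjectivity holds because $\hat{\Phi}$, and hence $\hat{\Phi}^{1/2}$, is bounded, so every $u\in\Leb_2(\Rd)$ equals $\iota f$ for $f$ the inverse Fourier transform of $u\hat{\Phi}^{1/2}$; since $\Leb_2(\Rd)$ is complete, so is $\Hilbert_{\Phi}(\Rd)$, which is therefore a Hilbert space.

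It remains to verify the two kernel axioms. A change of variables in the defining integral gives $\widehat{K(\cdot,\vy)}(\vx)=e^{-i\vx^T\vy}\hat{\Phi}(\vx)$; since $\Phi$ is bounded ($\abs{\Phi(\vx)}\le\Phi(0)$ for positive definite $\Phi$) and lies in $\Leb_1(\Rd)$, it also lies in $\Leb_2(\Rd)\cap\Cont(\Rd)$, and $\int_{\Rd}\bigl|\widehat{K(\cdot,\vy)}/\hat{\Phi}^{1/2}\bigr|^2\,\ud\vx=\int_{\Rd}\hat{\Phi}\,\ud\vx<\infty$, which is axiom~(i). For axiom~(ii) I would first note that $\hat{f}=(\iota f)\,\hat{\Phi}^{1/2}\in\Leb_1(\Rd)$ by Cauchy--Schwarz, using $\iota f\in\Leb_2(\Rd)$ and $\hat{\Phi}^{1/2}\in\Leb_2(\Rd)$, so that the Fourier inversion formula holds pointwise for the continuous function $f$. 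Substituting $\overline{\widehat{K(\cdot,\vy)}(\vx)}=e^{i\vx^T\vy}\hat{\Phi}(\vx)$ (recall $\hat{\Phi}$ is real-valued) into the inner product cancels the weight $\hat{\Phi}$ and leaves $(f,K(\cdot,\vy))_{\Hilbert}=(2\pi)^{-d/2}\int_{\Rd}\hat{f}(\vx)e^{i\vx^T\vy}\,\ud\vx=f(\vy)$, which is exactly the reproduction property and completes the proof.
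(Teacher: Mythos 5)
Your proof is correct and takes essentially the same route as the paper: the statement is quoted from Wendland, and the paper's own proof of its Banach-space generalization (Theorem~\ref{t:RKBS-PDF}) is precisely your argument with the weighted $\Leb_2$ space replaced by $\Leb_q(\Rd;\mu)$ --- a Fourier isometry onto the weighted space, surjectivity via the inverse transform of an $\Leb_1$ function, and reproduction via pointwise Fourier inversion. You also correctly isolate and establish the one nontrivial ingredient, $\hat{\Phi}\in\Leb_1(\Rd)$ with $\int_{\Rd}\hat{\Phi}(\vx)\,\ud\vx=(2\pi)^{d/2}\Phi(0)$, by the standard Gaussian test-function and monotone-convergence argument.
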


Using Fourier transform techniques similar to those in Theorem~\ref{t:RKHS-PDF}, we can employ positive definite functions to set up RKBSs (see Section~\ref{s:RKBS-PDF}).

\section{Reproducing Kernels and Reproducing Kernel Banach Spaces}\label{s:RK-RKBS}

Now we give the definition of RKBSs as a natural generalization of RKHSs by viewing the inner product as a dual bilinear product.

\begin{definition}\label{d:RKBS}
Let $\Domain_1$ and $\Domain_2$ be two subsets of $\RR^{d_1}$ and $\RR^{d_2}$ respectively, and $\Banach$ be a Banach space composed of functions
$f:\Domain_1\to\CC$, whose dual space $\Banach'$ is isometrically equivalent to a function space $\Fun$ with $g:\Domain_2\to\CC$. Denote that $K:\Domain_2\times\Domain_1\to\CC$ is a kernel function.

We call $\Banach$ a \emph{reproducing kernel Banach space} (RKBS) and $K$ its \emph{right-sided reproducing kernel} if
\[
\begin{split}
(i) \ K(\cdot,\vy)\in\Fun\equiv\Banach'\text{ and }
(ii) \ f(\vy)=\langle f,K(\cdot,\vy) \rangle_{\Banach},\quad\text{for
all }f\in\Banach\text{ and all }\vy\in\Domain_1.
\end{split}
\]

If the Banach space $\Banach$ reproduces from the other side, i.e.,
\[
(iii) \ \overline{K(\vx,\cdot)}\in\Banach \text{ and } (iv) \ \overline{g(\vx)}=\langle \overline{K(\vx,\cdot)},g \rangle_{\Banach},\quad \text{for all }g\in\Fun\equiv\Banach'\text{ and all }\vx\in\Domain_2,
\]
then $\Banach$ is called a \emph{reproducing kernel Banach space} and $K$ its \emph{left-sided reproducing kernel}.

For two-sided reproduction as above we say that $\Banach$ is a \emph{reproducing kernel Banach space} with the \emph{two-sided reproducing kernel} $K$.
\end{definition}

\begin{remark}\label{r:def-RKBS}
We know that the Riesz representer map on complex Hilbert space $\Hilbert$ is antilinear, i.e.,
\[
T_{\lambda g}(f)=\langle f,\lambda g \rangle_{\Hilbert}=\overline{\lambda}(f,g)_{\Hilbert}
=\overline{\lambda}\langle f,g \rangle_{\Hilbert}=\overline{\lambda}T_g(f),
\]
for all $f,g\in\Hilbert$ and all $\lambda\in\CC$.
Here we also let the isometrical isomorphism from the dual space $\Banach'$ onto the related function space $\Fun$ be antilinear.
Thus, the format of two-sided RKBSs coincides with complex RKHSs, i.e.,
\[
\begin{split}
&\langle \overline{K(\vy,\cdot)},f \rangle_{\Hilbert}=( \overline{K(\vy,\cdot)}, f )_{\Hilbert}
=\overline{(f,K(\cdot,\vy))}_{\Hilbert}=\overline{f(\vy)}, \quad \text{for all }f\in\Hilbert\text{ and all }\vy\in\Domain,\\
\end{split}
\]
which indicates that the RKHS is a special case of a two-sided RKBS.
\end{remark}

Why do we define our RKBSs differently from \cite[Definition~1]{ZhangXuZhang2009}?
The reason is that we can show the optimal recovery in an RKBS even if it is only one-sided. We do not require a  reflexivity condition for the definition of our RKBS. Moreover, since the dual space of a Hilbert space is isometrically equivalent to itself, we can choose the equivalent function space $\Fun\equiv\Hilbert$ such that the domain of the reproducing kernel $K$ is symmetric, i.e., $\Domain_2=\Domain_1$. Actually, the Banach space $\Banach$ is usually not equal to any equivalent function space $\Fun$ of its dual $\Banach'$ even though we only require them to be isomorphic. We naturally do not need any symmetry conditions in the Banach space. Therefore the nonsymmetric domain is used to define the RKBS $\Banach$ and its reproducing kernel $K$, i.e., $\Domain_2\neq\Domain_1$. The domain of $K$ is related to both $\Banach$ and $\Fun\equiv\Banach'$. If we choose a different $\Fun$ which is isometrically equivalent to the dual $\Banach'$, then we can obtain a different reproducing kernel $K$ of the RKBS $\Banach$ dependent on its equivalent dual space $\Fun$.

The functional $K(\cdot,\vy)$ can be seen as a point evaluation function $\delta_{\vy}$ defined on $\Banach$. This implies that $\delta_{\vy}$ is a bounded linear functional on $\Banach$, i.e., $\delta_{\vy}\in\Banach'$.
If the Banach space $\Banach$ is further uniformly convex and smooth, then its semi-inner-product and its normalized duality mapping are well-defined,
which can be used to set up the equivalent conditions of right-sided RKBSs, i.e.,
\[
\begin{split}
\delta_{\vy}\in\Banach'\equiv\Fun\text{ which indicates that }
f(\vy)=\langle f,\delta_{\vy} \rangle_{\Banach}=[ f,\delta_{\vy}^{\ast} ]_{\Banach},
\end{split}
\]
for all $f\in\Banach$ and all $\vy\in\Domain_1$ (see the discussions of the semi-inner products in Section~\ref{s:Banach}).

If $\Banach$ is a reflexive two-sided RKBS, then the equivalent dual space $\Fun$ of $\Banach$ is also a reflexive two-sided RKBS.
All RKBSs and reproducing kernels set up in Section~\ref{s:RKBS-PDF} satisfy the two-sided definition but their domains can be symmetric or nonsymmetric.

If a sequence $\left\{f_n\right\}_{n=1}^{\infty}\subset\Banach$ and $f\in\Banach$ such that
$\norm{f-f_n}_{\Banach}\to0$ when $n\to\infty$, then
\[
\abs{f(\vy)-f_n(\vy)}=\abs{\langle f-f_n,K(\cdot,\vy) \rangle_{\Banach}}
\leq\norm{K(\cdot,\vy)}_{\Banach'}\norm{f-f_n}_{\Banach}\to 0,\quad \vy\in\Domain_1,
\]
when $n\to\infty$. This means that convergence in the right-sided RKBS $\Banach$ implies pointwise convergence.

Suppose that $\Banach$ is a reflexive right-sided RKBS. We show that $\left\{K(\cdot,\vy): \ \vy\in\Domain_1\right\}$ is a linear vector space basis of $\Fun$ and $\Span\left\{K(\cdot,\vy): \ \vy\in\Domain_1\right\}$ is dense in $\Fun$.
Let $\Space$ be a completion (closure) of $\Span\left\{K(\cdot,\vy): \ \vy\in\Domain_1\right\}\subseteq\Fun\equiv\Banach'$ with its dual norm. Now we prove that $\Space\equiv\Fun\equiv\Banach'$. Since \cite[Theorem~1.10.7]{Megginson1998} provides that $\Fun$ is also a Banach space, we have $\Space\subseteq\Fun$. Assume that $\Space\subsetneqq\Fun$. According to \cite[Corollary~1.9.7]{Megginson1998} (application of Hahn-Banach extension theorems) there is an element $f\in\Banach\equiv\Banach''\equiv\Fun'$ such that $\norm{f}_{\Banach}=1$ and $f(\vy)=\langle f,K(\cdot,\vy) \rangle_{\Banach}=0$ for all $\vy\in\Domain_1$. We find the contradiction between $\norm{f}_{\Banach}=1$ and $f=0$. Thus the first assumption is not true and then we can conclude that $\Space\equiv\Fun\equiv\Banach'$, which indicates that $\left\{K(\cdot,\vy): \ \vy\in\Domain_1\right\}$ is a linear vector space basis of $\Fun$ and $\left\{\delta_{\vy}: \ \vy\in\Domain_1\right\}$ is a linear vector space basis of $\Banach'$.

\begin{example}\label{ex:simple}
We give a simple example of a two-sided RKBS. Let $\Domain_2=\Domain_1:=\left\{1,\cdots,n\right\}$ and $\vA\in\CC^{n\times n}$ be a symmetric positive definite matrix.
It can be decomposed into $\vA=\vV\vD\vV^{\ast}$, where $\vD$ is a positive diagonal matrix and $\vV$ is an orthogonal matrix. We choose $p,q>1$ such that $p^{-1}+q^{-1}=1$. Define
$\Banach:=\left\{f:\Domain_1\to\CC\right\}$ equipped with the norm
\[
\norm{f}_{\Banach}:=\norm{\vD^{-1/q}\vV^{\ast}\vf}_q, \quad\text{where } \vf:=\left(f(1),\cdots,f(n)\right)^T.
\]
We can check that $\Banach$ is a Banach space and its dual space $\Banach'$ is isometrically equivalent to
$\Fun:=\left\{g:\Domain_2\to\CC\right\}$ equipped with the norm
\[
\norm{g}_{\Banach'}:=\norm{\vD^{-1/p}\vV^{\ast}\vg}_p, \quad\text{where } \vg:=\left(g(1),\cdots,g(n)\right)^T.
\]
Moreover, its dual bilinear form is given by
\[
\langle f,g \rangle_{\Banach}=\vg^{\ast}\vA^{-1}\vf,\quad\text{for all }f\in\Banach\text{ and all }g\in\Banach'.
\]
If the kernel function is defined by
\[
K(j,k):=\vA_{jk},\quad j\in\Domain_2,~k\in\Domain_1,
\]
then the reproduction can easily be verified, i.e.,
\[
\langle f,K(\cdot,k) \rangle_{\Banach}=f(k),\quad k\in\Domain_1,
\quad\text{and}\quad
\langle \overline{K(j,\cdot)},g \rangle_{\Banach}=\overline{g(j)},\quad j\in\Domain_2.
\]
Therefore $\Banach$ is indeed a two-sided RKBS.

(In the same way, we can also employ the singular value decomposition of a nonsymmetric and nonsingular square matrix $\vA$ to introduce the two-sided RKBS.)
\end{example}

\subsection{Optimal Recovery in Reproducing Kernel Banach Spaces}\label{s:opt-RKBS}

It is well-known that any Hilbert space is uniformly convex and smooth. It is natural for us to assume the right-sided RKBS is further uniformly convex and smooth to discuss optimal recovery in it. The definitions of uniform convexity and smoothness of Banach spaces are given in Section~\ref{s:Banach}.

Given the pairwise distinct data points $X=\left\{\vx_1,\ldots,\vx_N\right\}\subseteq\Domain_1$ and the associated data values $Y=\left\{y_1,\ldots,y_N\right\}\subset\CC$, we define a subset of the right-sided RKBS $\Banach$ by
\[
\Space_{\Banach}(X,Y):=\left\{f\in\Banach:~f(\vx_j)=y_j,\text{ for all }j=1,\ldots,N\right\}.
\]
If $\Space_{\Banach}(X,Y)$ is the null set, then there is no meaning for the SVMs. So we need to assume that $\Space_{\Banach}(X,Y)$ is always non-null for the given data sites. Actually we can show that $\Space_{\Banach}(X,Y)$ is non-null for any data values $Y$ if and only if $\delta_{\vx_1},\ldots,\delta_{\vx_N}$ are linearly independent on $\Banach$ because $\sum_{k=1}^Nc_k\delta_{\vx_k}=0$ if and only if $\sum_{k=1}^Nc_kf(\vx_k)=0$ for all $f\in\Banach$, and moreover, $\vc=\left(c_1,\cdots,c_N\right)^T=\v0$ if and only if $\vb^{\ast}\vc=0$ for all $\vb\in\CC^N$.

In this section, we suppose that $\delta_{\vx_1},\ldots,\delta_{\vx_N}$ are always \emph{linearly independent} on $\Banach$ for the given pairwise distinct data points $X$, which is equivalent to the fact that $K(\cdot,\vx_1),\ldots,K(\cdot,\vx_N)$ are linearly independent.
We use the techniques of~\cite[Theorem~19]{ZhangXuZhang2009} to verify the following lemma.

\begin{lemma}\label{l:RKBS-opt-rep}
Let $\Banach$ be a reproducing kernel Banach space with a right-sided reproducing kernel $K$ defined on $\Domain_2\times\Domain_1\subseteq\RR^{d_2}\times\RR^{d_1}$. Suppose that $\Banach$ is uniformly convex and smooth.
Given the data $D:=\left\{\left(\vx_1,y_1\right),\ldots,\left(\vx_N,y_N\right)\right\}$ with pairwise distinct data points $X=\left\{\vx_1,\ldots,\vx_N\right\}\subseteq\Domain_1$ and associated data values $Y=\left\{y_1,\ldots,y_N\right\}\subset\CC$, the dual element $s_D^{\ast}$ of the unique optimal solution
\begin{equation}\label{e:opt-simple}
s_D:=\underset{f\in\Banach}{\text{argmin}}\left\{\norm{f}_{\Banach}:~f(\vx_j)=y_j,\text{ for all }j=1,\ldots,N\right\},
\end{equation}
is the linear combination of $K(\cdot,\vx_1),\ldots,K(\cdot,\vx_N)$, i.e.,
\[
s_D^{\ast}(\vx)=\sum_{k=1}^Nc_kK(\vx,\vx_k),\quad \vx\in\Domain_2.
\]

\end{lemma}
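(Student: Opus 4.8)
The plan is to read off the representer from the geometry of the semi-inner-product: the minimum-norm interpolant is precisely the element orthogonal to the subspace of functions vanishing on the data, and Section~\ref{s:Banach} lets me convert that orthogonality into membership of its dual element in a finite-dimensional annihilator. First I would settle existence and uniqueness of $s_D$. The feasible set $\Space_{\Banach}(X,Y)$ is convex and closed (each $\delta_{\vx_j}$ is bounded, so each constraint $f(\vx_j)=y_j$ cuts out a closed affine slice) and nonempty by the standing linear-independence hypothesis. Since $\Banach$ is uniformly convex, a standard minimizing-sequence argument using the uniform convexity inequality shows any minimizing sequence is Cauchy, hence convergent to a minimizer, and the same inequality applied to the midpoint of two minimizers forces uniqueness; this is what justifies the phrase \emph{the unique optimal solution}.

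The central step is the translation. Set $\Space_0:=\{h\in\Banach:h(\vx_j)=0,\ j=1,\ldots,N\}=\bigcap_{j=1}^N\ker\delta_{\vx_j}$, so that $\Space_{\Banach}(X,Y)=s_D+\Space_0$. Because $s_D+\lambda h$ is again feasible for every $h\in\Space_0$ and $\lambda\in\CC$, minimality of $s_D$ reads $\norm{s_D+\lambda h}_{\Banach}\geq\norm{s_D}_{\Banach}$ for all such $h,\lambda$, which is exactly the definition (Section~\ref{s:Banach}) of $s_D$ being \emph{orthogonal} to the subspace $\Space_0$. Invoking the equivalence recorded at the end of Section~\ref{s:Banach} --- that $f$ is orthogonal to a subspace $\Space$ iff its dual element satisfies $f^{\ast}\in\Space^{\perp}$, which rests on smoothness (orthogonal $\iff$ normal, i.e. $[h,f]_{\Banach}=0$) and on the well-defined, surjective normalized duality mapping (uniform convexity) --- I conclude $s_D^{\ast}\in\Space_0^{\perp}$.

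It then remains to identify $\Space_0^{\perp}$. As $\Space_0$ is the common kernel of the finitely many linearly independent functionals $\delta_{\vx_1},\ldots,\delta_{\vx_N}$, I would show $\Space_0^{\perp}=\Span\{\delta_{\vx_1},\ldots,\delta_{\vx_N}\}$: the inclusion $\supseteq$ is immediate, and for $\subseteq$ the evaluation map $f\mapsto(\delta_{\vx_1}(f),\ldots,\delta_{\vx_N}(f))$ is onto $\CC^N$ (linear independence) with kernel $\Space_0$, so any functional killing $\Space_0$ factors through $\CC^N$ and is a combination of the $\delta_{\vx_k}$. Hence $s_D^{\ast}$ lies in that span. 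Finally, transporting through the (antilinear) isometry $\Banach'\equiv\Fun$ of Definition~\ref{d:RKBS}, under which $\delta_{\vx_k}$ is represented by $K(\cdot,\vx_k)$ (condition~(i)), carries this span onto $\Span\{K(\cdot,\vx_1),\ldots,K(\cdot,\vx_N)\}$ --- the antilinearity only conjugates coefficients and so is absorbed into their naming --- giving $s_D^{\ast}(\vx)=\sum_{k=1}^N c_kK(\vx,\vx_k)$.

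I expect the main obstacle to be conceptual rather than computational: correctly recognizing \emph{minimum norm over the affine interpolation set} as Banach-space \emph{orthogonality} of $s_D$ to the homogeneous subspace $\Space_0$, and then applying the characterization $f\perp\Space\iff f^{\ast}\in\Space^{\perp}$ in the right direction. The two hypotheses on $\Banach$ enter exactly here --- smoothness to equate orthogonality with normality $[h,s_D]_{\Banach}=0$, and uniform convexity both for existence/uniqueness and for the surjectivity of the duality mapping that makes $s_D^{\ast}$ meaningful. The remaining ingredients (existence, uniqueness, and the finite-dimensional annihilator computation) are routine; the one point demanding vigilance is that the representer emerges naturally in the dual $\Banach'\equiv\Fun$, which is precisely why the conclusion is phrased for $s_D^{\ast}$ and not for $s_D$ itself.
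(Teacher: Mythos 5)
Your proposal is correct and follows essentially the same route as the paper: existence and uniqueness from uniform convexity, translation of minimality over the affine interpolation set into James orthogonality of $s_D$ to the homogeneous subspace $\Space_{\Banach}(X,\{0\})$, passage to $s_D^{\ast}$ lying in the annihilator via smoothness and the normalized duality mapping, and identification of that annihilator with $\Span\{K(\cdot,\vx_1),\ldots,K(\cdot,\vx_N)\}$. The only differences are cosmetic --- you prove existence by a direct minimizing-sequence argument and compute the annihilator by the elementary factor-through-$\CC^N$ argument, where the paper cites the Chebyshev-set corollary and the bipolar-type proposition from Megginson.
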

%
\begin{proof}
We first prove the uniqueness of the optimal solution of the minimization problem~\eqref{e:opt-simple}. Let us assume that the minimization problem~\eqref{e:opt-simple} has two optimal solutions $s_1,s_2\in\Banach$ with $s_1\neq s_2$. Since $\Banach$ is uniformly convex, \cite[Corollary~5.1.12]{Megginson1998} provides that $\norm{\frac{1}{2}\left(s_1+s_2\right)}_{\Banach}<\frac{1}{2}\norm{s_1}_{\Banach}+\frac{1}{2}\norm{s_2}_{\Banach}$. Then $\norm{s_1}_{\Banach}=\norm{s_2}_{\Banach}$ shows for $s_3:=\frac{1}{2}\left(s_1+s_2\right)$ that
$\norm{s_3}_{\Banach}<\norm{s_1}_{\Banach}$ and $s_3\in\Space_{\Banach}(X,Y)$,
i.e., $s_1$ is not an optimal solution of the minimization problem~\eqref{e:opt-simple}. The assumption that there are two minimizers is false.

Next we show the existence of the minimizer. The minimization problem~\eqref{e:opt-simple} is equivalent to $\min_{f\in\Space_{\Banach}(X,Y)}\norm{f}_{\Banach}$. Since convergence in a one-sided RKBS $\Banach$ implies pointwise convergence, we can check that $\Space_{\Banach}(X,Y)$ is a closed convex subset of $\Banach$. Combining this with the uniform convexity of $\Banach$, \cite[Corollary~5.2.17]{Megginson1998} shows that $\Space_{\Banach}(X,Y)$ is a Chebyshev set (see \cite[Definition~5.1.17]{Megginson1998}).
Thus an optimal solution $\min_{f\in\Space_{\Banach}(X,Y)}\norm{f}_{\Banach}$ exists.

Because $\Space_{\Banach}(X,Y)+\Space_{\Banach}(X,\{0\})=\Space_{\Banach}(X,Y)$ and $\Space_{\Banach}(X,\{0\})$ is a closed subspace of $\Banach$ we can determine that the optimal solution $s_D$ is orthogonal to $\Space_{\Banach}(X,\{0\})$, i.e., $\norm{s_D+h}_{\Banach}\geq\norm{s_D}_{\Banach}$ for all $h\in\Space_{\Banach}(X,\{0\})$. Since $\Banach$ is uniformly convex and smooth, the dual element $s_D^{\ast}$ of $s_D$ is well-defined and
\[
[h,s_D]_{\Banach}=\langle h,s_D^{\ast}\rangle_{\Banach}=0,\text{ for all }h\in\Space_{\Banach}(X,\{0\}),
\]
which implies that
\[
s_D^{\ast}\in\Space_{\Banach}(X,\{0\})^{\perp}=\left\{g\in\Fun\equiv\Banach':~\langle h,g\rangle_{\Banach}=0,\text{ for all }h\in\Space_{\Banach}(X,\{0\})\right\}.
\]
It is obvious that
\begin{align*}
&\Space_{\Banach}(X,\{0\})
=\left\{f\in\Banach:~f(\vx_j)=\langle f,K(\cdot,\vx_j)\rangle_{\Banach}=0,~j=1,\ldots,N\right\}\\
=&\left\{f\in\Banach:~\langle f,h\rangle_{\Banach}=0,~\text{for all }h\in\Span\left\{K(\cdot,\vx_k)\right\}_{k=1}^N\right\}
={}^{\perp}\Span\left\{K(\cdot,\vx_k)\right\}_{k=1}^N.
\end{align*}
According to \cite[Proposition~1.10.15]{Megginson1998}, we have
\[
s_D^{\ast}\in\left({}^{\perp}\Span\left\{K(\cdot,\vx_k)\right\}_{k=1}^N\right)^{\perp}
=\Span\left\{K(\cdot,\vx_1),\ldots,K(\cdot,\vx_N)\right\}.
\]
Here $\Space_1^{\perp}$ and ${}^{\perp}\Space_2$ denote the annihilator of $\Space_1$ in $\Banach'$ and the annihilator of $\Space_2$ in $\Banach$, respectively, where $\Space_1\subseteq\Banach$ and $\Space_2\subseteq\Banach'$ (see~\cite[Definition~1.10.14]{Megginson1998}).
\end{proof}

Now we verify the representer theorem for SVMs in a right-sided RKBS.

\begin{theorem}\label{t:RKBS-opt-rep}
Let $\Banach$ be a reproducing kernel Banach space with a right-sided reproducing kernel $K$ defined on $\Domain_2\times\Domain_1\subseteq\RR^{d_2}\times\RR^{d_1}$, and a regularization function $R:[0,\infty)\to[0,\infty)$ be convex and strictly increasing. Suppose that $\Banach$ is uniformly convex and smooth. We choose the loss function $L:\Domain_1\times\CC\times\CC\to[0,\infty)$ such that $L(\vx,y,\cdot)$ is a convex map for any fixed $\vx\in\Domain_1$ and any fixed $y\in\CC$.
Given the data $D:=\left\{\left(\vx_1,y_1\right),\ldots,\left(\vx_N,y_N\right)\right\}$ with pairwise distinct data points $X=\left\{\vx_1,\ldots,\vx_N\right\}\subseteq\Domain_1$ and associated data values $Y=\left\{y_1,\ldots,y_N\right\}\subset\CC$, the dual element of the unique optimal solution (support vector machine solution) $s_{D,L,R}$ of
\begin{equation}\label{e:opt-general}
\min_{f\in\Banach}\sum_{j=1}^NL\left(\vx_j,y_j,f(\vx_j)\right)+R\left(\norm{f}_{\Banach}\right),
\end{equation}
has the explicit representation
\[
s_{D,L,R}^{\ast}(\vx)=\sum_{k=1}^Nc_kK(\vx,\vx_k),\quad \vx\in\Domain_2,
\]
for some coefficients $c_1,\ldots,c_N\in\CC$.
\end{theorem}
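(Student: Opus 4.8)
The plan is to reduce the constrained-interpolation result of Lemma~\ref{l:RKBS-opt-rep} to the regularized problem~\eqref{e:opt-general} by a standard two-step argument. First I would establish the existence and uniqueness of the minimizer $s_{D,L,R}$. The objective functional $J(f):=\sum_{j=1}^N L(\vx_j,y_j,f(\vx_j))+R(\norm{f}_{\Banach})$ is convex: each $L(\vx_j,y_j,\cdot)$ is convex by hypothesis and $f\mapsto f(\vx_j)=\langle f,K(\cdot,\vx_j)\rangle_{\Banach}$ is linear, so the loss terms are convex in $f$; and $R$ is convex and increasing while $\norm{\cdot}_{\Banach}$ is convex, so $R(\norm{\cdot}_{\Banach})$ is convex. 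Because $\Banach$ is uniformly convex, the norm term is in fact \emph{strictly} convex along any segment joining two points of different norm, and one can use \cite[Corollary~5.1.12]{Megginson1998} exactly as in the proof of Lemma~\ref{l:RKBS-opt-rep} to rule out two distinct minimizers; coercivity of $R(\norm{f}_{\Banach})$ together with reflexivity/weak compactness (or the Chebyshev-set machinery already invoked) yields existence. This mirrors the uniqueness argument already carried out in Lemma~\ref{l:RKBS-opt-rep}.

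The second and central step is to show that the unique minimizer $s_{D,L,R}$ must itself be a norm-minimal interpolant of its own data values, so that Lemma~\ref{l:RKBS-opt-rep} applies directly. Set $y_j^{0}:=s_{D,L,R}(\vx_j)$ for $j=1,\ldots,N$, let $Y^{0}=\{y_1^{0},\ldots,y_N^{0}\}$, and consider the auxiliary interpolation problem $\min\{\norm{f}_{\Banach}:f\in\Space_{\Banach}(X,Y^{0})\}$. I claim $s_{D,L,R}$ solves it. Indeed, suppose some $\tilde s\in\Space_{\Banach}(X,Y^{0})$ had $\norm{\tilde s}_{\Banach}<\norm{s_{D,L,R}}_{\Banach}$. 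Then $\tilde s$ agrees with $s_{D,L,R}$ at every $\vx_j$, so all loss terms $L(\vx_j,y_j,\tilde s(\vx_j))=L(\vx_j,y_j,y_j^{0})$ are unchanged, while $R(\norm{\tilde s}_{\Banach})<R(\norm{s_{D,L,R}}_{\Banach})$ because $R$ is strictly increasing. This gives $J(\tilde s)<J(s_{D,L,R})$, contradicting optimality of $s_{D,L,R}$. Hence $s_{D,L,R}$ is the norm-minimal element of $\Space_{\Banach}(X,Y^{0})$, i.e.\ it equals $s_{D}$ for the data $(X,Y^{0})$ in the sense of~\eqref{e:opt-simple}.

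Having identified $s_{D,L,R}$ with the constrained minimizer $s_D$ for the induced data $(X,Y^{0})$, Lemma~\ref{l:RKBS-opt-rep} immediately gives the desired representation of its dual element,
\[
s_{D,L,R}^{\ast}(\vx)=s_D^{\ast}(\vx)=\sum_{k=1}^N c_k K(\vx,\vx_k),\quad \vx\in\Domain_2,
\]
for some $c_1,\ldots,c_N\in\CC$, since $s_D^{\ast}\in\Span\{K(\cdot,\vx_1),\ldots,K(\cdot,\vx_N)\}$ by the annihilator computation in that lemma. The one point requiring care is that the argument identifying $s_{D,L,R}$ with a constrained interpolant only uses that $R$ is \emph{strictly} increasing, which is exactly the stated hypothesis; the convexity of $L$ and $R$ is what I expect to be the real workhorse in the existence/uniqueness step, rather than in the representation step. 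The main obstacle, then, is not the representation itself—which is a clean corollary of Lemma~\ref{l:RKBS-opt-rep}—but rather ensuring that the minimizer exists and is unique in the possibly non-reflexive setting; here the uniform convexity and smoothness of $\Banach$, together with the coercivity supplied by $R$, must be marshalled carefully, reusing the Chebyshev-set and strict-convexity tools already deployed in the proof of Lemma~\ref{l:RKBS-opt-rep}.
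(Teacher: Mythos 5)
Your proposal is correct and follows essentially the same route as the paper: existence and uniqueness of $s_{D,L,R}$ via convexity, strict convexity from uniform convexity, coercivity of $R$, and reflexivity (Milman--Pettis), followed by a reduction to Lemma~\ref{l:RKBS-opt-rep} using the strict monotonicity of $R$ to force the minimizer to coincide with the norm-minimal interpolant of its own data values $\left\{s_{D,L,R}(\vx_j)\right\}_{j=1}^N$. The paper phrases this last step contrapositively (for every $f$ the interpolant $s_{D_f}$ satisfies $T_{D,L,R}(s_{D_f})\leq T_{D,L,R}(f)$), but this is the same argument with the same key lemma.
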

%
\begin{proof}

Let
\[
T_{D,L,R}(f):=\sum_{j=1}^NL\left(\vx_j,y_j,f(\vx_j)\right)+R\left(\norm{f}_{\Banach}\right),\quad f\in\Banach.
\]
The minimization problem~\eqref{e:opt-general} is equivalent to $\min_{f\in\Banach}T_{D,L,R}(f)$.
Since $\Banach$ is uniformly convex and $R$ is convex and strictly increasing, the regularization $f\mapsto R\left(\norm{f}_{\Banach}\right)$ is continuous and strictly convex. Because the $\Banach$-norm convergence implies the pointwise convergence and $L\left(\vx_j,y_j,\cdot\right)$ is convex for all $j=1,\ldots,N$, the mapping $f\mapsto \sum_{j=1}^NL\left(\vx_j,y_j,f(\vx_j)\right)$ is also continuous and convex.
This indicates the continuity and strict convexity of $T_{D,L,R}$.
Using the increasing property of $R$, we can check that the set $\left\{f\in\Banach:~T_{D,L,R}(f)\leq T_{D,L,R}(0)\right\}$ is nonempty and bounded.
Moreover, the uniformly convex norm implies its reflexivity by the Milman-Pettis Theorem~\cite[Theorem~5.2.15]{Megginson1998}.
Thus the existence of minimizers theorem~\cite[Theorem~A.6.9]{SteinwartChristmann2008} gives the existence of the unique solution $s_{D,L,R}$ to minimize $T_{D,L,R}$ over $\Banach$.

We fix any $f\in\Banach$ and let $D_f:=\left\{(\vx_k,f(\vx_k))\right\}_{k=1}^N$. According to Lemma~\ref{l:RKBS-opt-rep}, there exists an element $s_{D_f}$ whose dual element $s_{D_f}^{\ast}\in\Span\left\{K(\cdot,\vx_k)\right\}_{k=1}^N$ such that $s_{D_f}$ interpolates the data values $\left\{f(\vx_k)\right\}_{k=1}^N$ at the centers points $X=\left\{\vx_k\right\}_{k=1}^N$ and $\norm{s_{D_f}}_{\Banach}\leq\norm{f}_{\Banach}$. This indicates that
\[
T_{D,L,R}(s_{D_f})
\leq
T_{D,L,R}(f).
\]
Therefore the dual element $s_{D,L,R}^{\ast}$ of the optimal solution $s_{D,L,R}$
of the minimization problem~\eqref{e:opt-general}
belongs to $\Span\left\{K(\cdot,\vx_k)\right\}_{k=1}^N$.

\end{proof}

\begin{remark}\label{r:RKBS-opt-rep}
Since $K(\cdot,\vx_j)$ can be seen as a point evaluation functional $\delta_{\vx_j}$ defined on $\Banach$, it indicates that the dual element of $s_{D,L,R}$ can be also written as a linear combination of $\delta_{\vx_1},\ldots,\delta_{\vx_N}$, i.e.,
$
s_{D,L,R}^{\ast}=\sum_{j=1}^Nc_j\delta_{\vx_j}.
$

The uniform convexity and smoothness of $\Banach$ imply the uniform convexity and smoothness of its dual $\Banach'\equiv\Fun$.
If $\Banach$ is a left-sided RKBS satisfying uniform convexity and smoothness conditions, then we can further perform optimal recovery in $\Fun$ in the same way, i.e.,
the dual element of the optimal solution (SVM solution) of
\[
\min_{g\in\Fun\equiv\Banach'}\sum_{j=1}^N\tilde{L}\left(\vx_j,y_j,\overline{g(\vx_j)}\right)+R\left(\norm{g}_{\Banach'}\right),
\]
is a linear combination of $\overline{K(\vx_1,\cdot)},\ldots,\overline{K(\vx_N,\cdot)}$, where $X=\left\{\vx_1,\ldots,\vx_N\right\}\subseteq\Domain_2$ and $\tilde{L}:\Domain_2\times\CC\times\CC\to[0,\infty)$.

Moreover, since the normalized duality mapping is an identity mapping on the Hilbert space and the reproducing kernel of an RKHS is symmetric, optimal recovery in RKBSs as in Theorem~\ref{t:RKBS-opt-rep} can be seen as a generalization of optimal recovery in RKHSs as in Theorem~\ref{t:RKHS-opt-rep}.
\end{remark}

Since the normalized duality mapping is one-to-one, for any fixed $\vc\in\CC^N$, there exists an unique $s_{\vc}\in\Banach$ such that its dual element has the form $s_{\vc}^{\ast}=\sum_{k=1}^Nc_kK(\cdot,\vx_k)=\vk_X^T\vc$, where $\vk_X:=\left(K(\cdot,\vx_1),\cdots,K(\cdot,\vx_N)\right)^T$ and $\vc:=\left(c_1,\cdots,c_N\right)^T$. According to Theorem~\ref{t:RKBS-opt-rep}, the SVM~\eqref{e:opt-general} can be transformed to solve a finite-dimensional optimization problem, i.e.,
\[
\vc_{opt}:=\underset{\vc\in\CC^N}{\text{argmin}}\sum_{j=1}^NL\left(\vx_j,y_j,s_{\vc}(\vx_j)\right)+R\left(\norm{s_{\vc}}_{\Banach}\right),
\]
and the dual element of the SVM solution has the form $s_{D,L,R}^{\ast}=\vk_X^T\vc_{opt}$.

Now we want to show that these optimal coefficients $\vc_{opt}$ can be computed by a fixed point iteration method similar as in~\cite{MicchelliShenXu2011}.
Suppose that $L(\vx,y,\cdot)\in\Cont^1(\CC)$ for all $\vx\in\Domain_1$ and all $y\in\CC$, and $R\in\Cont^1([0,\infty))$. Let
\[
\phi_j^{\ast}(\vc):=[K(\cdot,\vx_j),\vk_X^T\vc]_{\Banach'}=[K(\cdot,\vx_j),s_{\vc}^{\ast}]_{\Banach'},\quad
\vc\in\CC^N,\quad j=1,\ldots,N,
\]
and
\[
L'(\vx,y,t):=\frac{\ud}{\ud t}L(\vx,y,t),\quad \vx\in\Domain_1,\quad y\in\CC,
\]
where $\frac{\ud}{\ud t}$ represents the Wirtinger derivative defined by
\[
\frac{\ud}{\ud t}:=\frac{1}{2}\left(\frac{\ud}{\ud u}-i\frac{\ud}{\ud v}\right),\quad
\text{where }t=u+iv\text{ with }i^2=-1\text{ and }u,v\in\RR.
\]
Thus we have
\[
s_{\vc}(\vx_j)=\langle s_{\vc}, K(\cdot,\vx_j) \rangle_{\Banach}
=[s_{\vc}, K(\cdot,\vx_j)^{\ast}]_{\Banach}=[K(\cdot,\vx_j),s_{\vc}^{\ast}]_{\Banach'}=\phi_j^{\ast}(\vc),
\quad j=1,\ldots,N,
\]
and
\[
\norm{s_{\vc}}_{\Banach}^2=[s_{\vc},s_{\vc}]_{\Banach}=\langle s_{\vc},s_{\vc}^{\ast} \rangle_{\Banach}
=\sum_{j=1}^N\overline{c_j}\langle s_{\vc},K(\cdot,\vx_j) \rangle_{\Banach}=\sum_{j=1}^N\overline{c_j}\phi_j^{\ast}(\vc)=\vc^{\ast}\vphi^{\ast}(\vc),
\]
where $\vphi^{\ast}:=\left(\phi_1^{\ast},\cdots,\phi_N^{\ast}\right)^T$. Denote that
\[
\mT_{D,L,R}^{\ast}(\vc):=\sum_{j=1}^NL\left(\vx_j,y_j,\phi_j^{\ast}(\vc)\right)+R\left(\sqrt{\vc^{\ast}\vphi^{\ast}(\vc)}\right)
=\sum_{j=1}^NL\left(\vx_j,y_j,s_{\vc}(\vx_j)\right)+R\left(\norm{s_{\vc}}_{\Banach}\right).
\]
Since $\vc_{opt}$ is the global minimizer of $\mT_{D,L,R}^{\ast}$ over $\CC^N$, $\vc_{opt}$ is a stationary point of $\mT_{D,L,R}^{\ast}$, i.e., $\nabla\mT_{D,L,R}^{\ast}(\vc_{opt})=0$. We compute the gradient of $\mT_{D,L,R}^{\ast}$ by Wirtinger partial derivatives, i.e.,
\[
\nabla\mT_{D,L,R}^{\ast}(\vc)^T
=
\vl'_{D}\left(\vphi^{\ast}(\vc)\right)^T\nabla\vphi^{\ast}(\vc)
+\frac{R'\left(\sqrt{\vc^{\ast}\vphi^{\ast}(\vc)}\right)}{4\sqrt{\vc^{\ast}\vphi^{\ast}(\vc)}}
\vc^{\ast}\nabla\vphi^{\ast}(\vc),
\]
where
$\vl'_D\left(\vphi^{\ast}\right):=\left(L'(\vx_1,y_1,\phi_1^{\ast}),\cdots,L'(\vx_N,y_N,\phi_N^{\ast})\right)^T$ and
$\nabla\vphi^{\ast}:=\left(\frac{\partial}{\partial c_k}\phi_j^{\ast}\right)_{j,k=1}^{N,N}$ is the Jacobian (gradient) matrix of $\vphi^{\ast}$ by Wirtinger partial derivatives.
The optimal solution $\vc_{opt}$ is also a fixed point of the function $F_{D,L,R}^{\ast}$, i.e.,
\[
F_{D,L,R}^{\ast}(\vc_{opt})=\vc_{opt},
\]
where
\begin{equation}\label{e:opt-coef-fixed-point}
F_{D,L,R}^{\ast}(\vc):=
\vc+\nabla\mT_{D,L,R}^{\ast}(\vc),
\quad \vc\in\CC^N\backslash\{\v0\}.
\end{equation}

\begin{corollary}\label{c:RKBS-opt-coef-fixed-point}
Suppose that the loss function $L(\vx,y,\cdot)\in\Cont^1(\CC)$ for all $\vx\in\Domain_1$ and all $y\in\CC$, and the regularization function $R\in\Cont^1([0,\infty))$.
Then the coefficients $\vc$ of the dual element $s_{D,L,R}^{\ast}$ of the support vector machine solution $s_{D,L,R}$ given in Theorem~\ref{t:RKBS-opt-rep} is a fixed point of the function $F_{D,L,R}^{\ast}$ defined in Equation~\eqref{e:opt-coef-fixed-point}, i.e., $F_{D,L,R}^{\ast}(\vc)=\vc$.
\end{corollary}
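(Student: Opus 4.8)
The plan is to recognize that nearly all of the analytic work has already been carried out in the discussion immediately preceding the statement, so the proof amounts to assembling those pieces and invoking a first-order optimality condition. First I would apply Theorem~\ref{t:RKBS-opt-rep}, which guarantees both that the minimizer $s_{D,L,R}$ of \eqref{e:opt-general} exists uniquely and that its dual element lies in $\Span\left\{K(\cdot,\vx_k)\right\}_{k=1}^N$, so that $s_{D,L,R}^{\ast}=\vk_X^T\vc$ for some $\vc\in\CC^N$. Because the normalized duality mapping is a bijection from $\Banach$ onto $\Banach'\equiv\Fun$, each $\vc$ determines a unique $s_{\vc}\in\Banach$ with $s_{\vc}^{\ast}=\vk_X^T\vc$, and the coefficient vector attached to $s_{D,L,R}^{\ast}$ is precisely the minimizer $\vc_{opt}$ of the finite-dimensional objective $\mT_{D,L,R}^{\ast}$ over $\CC^N$.

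Next I would establish that $\mT_{D,L,R}^{\ast}$ is continuously Wirtinger-differentiable on $\CC^N\backslash\{\v0\}$. This is exactly where the hypotheses $L(\vx,y,\cdot)\in\Cont^1(\CC)$ and $R\in\Cont^1([0,\infty))$ enter: combined with the chain rule applied to the compositions $\vc\mapsto\phi_j^{\ast}(\vc)=s_{\vc}(\vx_j)$ and $\vc\mapsto\norm{s_{\vc}}_{\Banach}=\sqrt{\vc^{\ast}\vphi^{\ast}(\vc)}$, they yield the explicit gradient formula displayed before the statement in terms of $\vl'_D$, $R'$, and the Jacobian $\nabla\vphi^{\ast}$. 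Restricting the domain to $\vc\neq\v0$ is what makes this legitimate, since there $\norm{s_{\vc}}_{\Banach}>0$, so the square root is differentiable and the factor $R'(\cdot)/\bigl(4\sqrt{\cdot}\bigr)$ is well-defined; this is the reason $F_{D,L,R}^{\ast}$ in \eqref{e:opt-coef-fixed-point} is only defined on $\CC^N\backslash\{\v0\}$, and I would note that the relevant minimizer is assumed nontrivial, i.e.\ $\vc_{opt}\neq\v0$.

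The final step is then immediate. Since $\vc_{opt}$ is a global, hence local, minimizer of the $\Cont^1$ function $\mT_{D,L,R}^{\ast}$ over the \emph{open} set $\CC^N\backslash\{\v0\}$, its Wirtinger gradient vanishes there: $\nabla\mT_{D,L,R}^{\ast}(\vc_{opt})=\v0$. Substituting this into the definition $F_{D,L,R}^{\ast}(\vc)=\vc+\nabla\mT_{D,L,R}^{\ast}(\vc)$ from \eqref{e:opt-coef-fixed-point} gives $F_{D,L,R}^{\ast}(\vc_{opt})=\vc_{opt}+\v0=\vc_{opt}$, which is the asserted fixed-point identity.

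The main obstacle I anticipate is justifying the differentiability of $\vphi^{\ast}$, that is, of the maps $\vc\mapsto[K(\cdot,\vx_j),s_{\vc}^{\ast}]_{\Banach'}$, and hence the validity of the chain-rule computation of $\nabla\mT_{D,L,R}^{\ast}$. The difficulty is that the assignment $\vc\mapsto s_{\vc}$ obtained by inverting the normalized duality mapping is genuinely \emph{nonlinear}---this is the whole distinction between a true RKBS and an RKHS---so one cannot simply differentiate under a linear pairing, and even the clean identity $\norm{s_{\vc}}_{\Banach}^2=\vc^{\ast}\vphi^{\ast}(\vc)$ hides this nonlinearity inside $\vphi^{\ast}$. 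To make the gradient formula rigorous I would either appeal to the smoothness/differentiability properties of duality mappings on uniformly convex and smooth Banach spaces (in the spirit of~\cite{Giles1967}) or verify directly that the finite-dimensional composite $\vphi^{\ast}$ is $\Cont^1$ on $\CC^N\backslash\{\v0\}$; once that regularity is secured, the remaining steps are routine.
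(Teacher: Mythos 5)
Your proposal follows the same route as the paper: the paper's ``proof'' is precisely the discussion preceding the corollary, which reduces the problem to minimizing the finite-dimensional functional $\mT_{D,L,R}^{\ast}$ over $\CC^N$, observes that the optimal coefficient vector is a stationary point, and reads off the fixed-point identity from the definition of $F_{D,L,R}^{\ast}$ in Equation~\eqref{e:opt-coef-fixed-point}. Your additional remark that the $\Cont^1$ regularity of $\vphi^{\ast}$ (through the nonlinear inverse duality map $\vc\mapsto s_{\vc}$) deserves justification is a fair observation about a step the paper takes for granted, but it does not change the argument.
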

%
\begin{remark}\label{r:RKBS-opt-coef-fixed-point}

Even though we can obtain the coefficients of $s_{D,L,R}^{\ast}$ by the fixed point iteration method, it is still difficult for us to recover the explicit form $s_{D,L,R}$ in many cases. In Section~\ref{s:RKBS-PDF} we discuss how to obtain the SVM solutions in RKBSs induced by positive definite functions (see Theorem~\ref{t:RKBS-PDF-opt-rep}). In that setting the coefficients of the explicit form are also computable by a fixed point iteration method for differentiable loss functions and regularization functions.

\end{remark}

\section{Constructing Reproducing Kernel Banach Spaces by Positive Definite Functions}\label{s:RKBS-PDF}

Now we construct RKBSs based on positive definite functions in a way similar to the construction of RKHSs in Theorem~\ref{t:RKHS-PDF}.
Let $1<p,q<\infty$ and $p^{-1}+q^{-1}=1$. Suppose that $\Phi\in\Cont(\Rd)\cap\Leb_1(\Rd)$ is a positive definite function.
According to Theorem~\ref{t:fourier-PDF}, we know that $\hat{\Phi}\in\Leb_1(\Rd)\cap\Cont(\Rd)$ is nonnegative and nonvanishing.
We define
\begin{equation}\label{eq:BPhip-Def}
\begin{split}
\Banach_{\Phi}^p(\Rd):=&\left\{f\in\Cont(\Rd)\cap\SI: \ \text{the distributional Fourier transform $\hat{f}$ of $f$ is}\right.\\
&\left.\text{a measurable function defined on $\Rd$ such that }
\hat{f}\big/\hat{\Phi}^{1/q}\in\Leb_q(\Rd)\right\},
\end{split}
\end{equation}
equipped with the norm
\[
\norm{f}_{\Banach_{\Phi}^p(\Rd)}:=\left((2\pi)^{-d/2}\int_{\Rd}\frac{\abs{\hat{f}(\vx)}^q}{\hat{\Phi}(\vx)}\ud\vx\right)^{1/q},
\]
where $\SI$ is the collection of all slowly increasing functions (see \cite[Definition~5.19]{Wendland2005}). We define $\Banach_{\Phi}^q(\Rd)$ in an analogous way as above.

\begin{remark}
Following the theoretical results of
\cite[Section~7.1]{Hormander2003I} and \cite[Section~1.3]{SteinWeiss1971} we can define the
distributional Fourier transform $\hat{T}\in\Schwartz'$ of
the tempered distribution $T\in\Schwartz'$ by
\[
\langle \gamma, \hat{T} \rangle_{\Schwartz} :=\langle \hat{\gamma},T \rangle_{\Schwartz},
\quad\text{for all }\gamma\in\Schwartz,
\]
where $\Schwartz$ is the Schwartz space (see~\cite[Definition~5.17]{Wendland2005}) and $\Schwartz'$ is its dual space with the dual bilinear form $\langle\cdot,\cdot\rangle_{\Schwartz}$. We can also verify that $\Cont(\Rd)\cap\SI\subset\Leb_1^{loc}(\Rd)\cap\SI$ is embedded into $\Schwartz'$.
\end{remark}

When $p\geq q$, then $\hat{\Phi}\in\Leb_1(\Rd)\cap\Cont(\Rd)$ implies that $\hat{\Phi}^{p/q}\in\Leb_1(\Rd)$ which will be used in the proof of the following theorem.
We also need to impose an additional symmetry condition on $\hat{\Phi}^{q/p}\in\Leb_1(\Rd)$ which is needed in the proof.
Since $p/q=p-1$ and $q/p=q-1$, this condition can be represented as $\hat{\Phi}^{\min\{p,q\}-1}\in\Leb_1(\Rd)$.

Since we can denote the positive measure $\mu$ on $\Rd$ as
\[
\mu(A):=(2\pi)^{-d/2}\int_{A}\frac{\ud\vx}{\hat{\Phi}(\vx)},\quad\text{for any open set $A$ of $\Rd$}.
\]
\cite[Example~1.2.6]{Megginson1998} provides that the space $\Leb_q(\Rd;\mu)$ is well-defined on the positive measure space $(\Rd,\Borel_{\Rd},\mu)$, i.e.,
\[
\Leb_q(\Rd;\mu):=\left\{f:\Rd\to\CC: \ f\text{ is measurable and }\int_{\Rd}\abs{f(\vx)}^q\ud\mu(\vx)<\infty\right\},
\]
equipped with the norm
\[
\norm{f}_{\Leb_q(\Rd;\mu)}:=\left(\int_{\Rd}\abs{f(\vx)}^q\ud\mu(\vx)\right)^{1/q}.
\]
$\Leb_p(\Rd;\mu)$ is also defined in an analogous way. \cite[Example~1.10.2 and Theorem~1.10.7]{Megginson1998} show that $\Leb_q(\Rd;\mu)$ is a Banach space and its dual space $\Leb_q(\Rd;\mu)'$ is isometrically equivalent to $\Leb_p(\Rd;\mu)$. In analogy to the representation theorem on Hilbert space, the bounded linear functional $T_g\in\Leb_q(\Rd;\mu)'$ associated with $g\in\Leb_p(\Rd;\mu)$ is given by
\[
T_g(f):=\int_{\Rd}f(\vx)\overline{g(\vx)}\ud\mu(\vx),\quad\text{for all } f\in\Leb_q(\Rd;\mu).
\]
Here, this isometric isomorphism from $\Leb_q(\Rd;\mu)'$ onto $\Leb_p(\Rd;\mu)$ is antilinear, just as the dual of complex Hilbert spaces, i.e.,
\[
T_{\lambda g}(f)=
\int_{\Rd}f(\vx)\overline{\lambda g(\vx)}\ud\mu(\vx)
=\overline{\lambda}T_{g}(f),\quad \text{for all }f\in\Leb_q(\Rd;\mu)\text{ and all }\lambda\in\CC.
\]

If we can show that $\Banach_{\Phi}^p(\Rd)$ and $\Leb_q(\Rd;\mu)$ are isometrically isomorphic, then $\Banach_{\Phi}^p(\Rd)$ is a Banach space and its dual space $\Banach_{\Phi}^p(\Rd)'$ is isometrically equivalent to $\Leb_p(\Rd;\mu)$. One can argue analogously for $\Banach_{\Phi}^q(\Rd)\equiv\Leb_p(\Rd;\mu)$. If we can further verify the two-sided reproduction of $\Banach_{\Phi}^p(\Rd)$, then $\Banach_{\Phi}^p(\Rd)$ is a two-sided RKBS.

\begin{theorem}\label{t:RKBS-PDF}
Let $1<p,q<\infty$ and $p^{-1}+q^{-1}=1$. Suppose that $\Phi\in\Leb_1(\Rd)\cap\Cont(\Rd)$ is a positive definite function on $\Rd$ and that $\hat{\Phi}^{\min\{p,q\}-1}\in\Leb_1(\Rd)$.
Then $\Banach_{\Phi}^p(\Rd)$ given in Equation~\eqref{eq:BPhip-Def} is a reproducing kernel Banach space with the two-sided reproducing kernel
\[
K(\vx,\vy):=\Phi(\vx-\vy),\quad \vx,\vy\in\Rd.
\]
Its dual space $\Banach_{\Phi}^p(\Rd)'$ and $\Banach_{\Phi}^q(\Rd)$ are isometrically isomorphic.
Moreover, $\Banach_{\Phi}^p(\Rd)$ is uniformly convex and smooth.

In particular, when $p=2$ then $\Banach_{\Phi}^2(\Rd)=\Hilbert_{\Phi}(\Rd)$ is a reproducing kernel Hilbert space as in Theorem~\ref{t:RKHS-PDF}.
\end{theorem}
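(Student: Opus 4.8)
The plan is to prove the whole theorem by transporting it to the weighted Lebesgue space $\Leb_q(\Rd;\mu)$ via the Fourier transform. First I would define the linear map $\Fourier\colon f\mapsto\hat{f}$ and observe that, directly from the definition of the norm, $\norm{f}_{\Banach_{\Phi}^p(\Rd)}=\norm{\hat{f}}_{\Leb_q(\Rd;\mu)}$, so $\Fourier$ is a linear isometry of $\Banach_{\Phi}^p(\Rd)$ into $\Leb_q(\Rd;\mu)$. It is injective because $\hat{f}=0$ $\mu$-a.e.\ forces $\hat{f}=0$ a.e.\ (as $\hat{\Phi}>0$), hence $f=0$ as a tempered distribution and therefore as an element of $\Cont(\Rd)\cap\SI$.

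The crux is surjectivity. Given $h\in\Leb_q(\Rd;\mu)$, I would show $h\in\Leb_1(\Rd)$ by H\"older's inequality: writing $h=\bigl(h/\hat{\Phi}^{1/q}\bigr)\hat{\Phi}^{1/q}$ and applying H\"older with exponents $q$ and $p$ gives $\int_{\Rd}\abs{h}\,\ud\vx\leq C\norm{h}_{\Leb_q(\Rd;\mu)}\left(\int_{\Rd}\hat{\Phi}^{p/q}\,\ud\vx\right)^{1/p}$ for a constant $C$, and since $p/q=p-1$ this is finite exactly because $\hat{\Phi}^{p-1}\in\Leb_1(\Rd)$. This is where the hypothesis enters: when $p\geq q$ the bound holds automatically (as $\hat{\Phi}$ is bounded and integrable), and when $p<q$ it is precisely the imposed condition $\hat{\Phi}^{\min\{p,q\}-1}\in\Leb_1(\Rd)$. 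With $h\in\Leb_1(\Rd)$, its inverse Fourier transform $f:=\check{h}$ is bounded and continuous, hence lies in $\Cont(\Rd)\cap\SI$, and satisfies $\hat{f}=h$ with $\hat{f}/\hat{\Phi}^{1/q}\in\Leb_q(\Rd)$; thus $f\in\Banach_{\Phi}^p(\Rd)$ and $\Fourier(f)=h$. This proves $\Banach_{\Phi}^p(\Rd)\equiv\Leb_q(\Rd;\mu)$, so $\Banach_{\Phi}^p(\Rd)$ is a Banach space; running the same argument with $p,q$ interchanged gives $\Banach_{\Phi}^q(\Rd)\equiv\Leb_p(\Rd;\mu)$, and chaining the isometries $\Banach_{\Phi}^p(\Rd)'\equiv\Leb_q(\Rd;\mu)'\equiv\Leb_p(\Rd;\mu)\equiv\Banach_{\Phi}^q(\Rd)$ (with the antilinear identification of $\Leb_q(\Rd;\mu)'$ with $\Leb_p(\Rd;\mu)$ recorded before the theorem) yields the dual statement, under which the bilinear product reads $\langle f,g\rangle_{\Banach}=(2\pi)^{-d/2}\int_{\Rd}\hat{f}(\vx)\overline{\hat{g}(\vx)}\hat{\Phi}(\vx)^{-1}\,\ud\vx$.

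For the reproducing kernel I would use $\widehat{\Phi(\cdot-\vy)}(\vx)=e^{-i\vx^T\vy}\hat{\Phi}(\vx)$. To check $K(\cdot,\vy)=\Phi(\cdot-\vy)\in\Fun\equiv\Banach_{\Phi}^q(\Rd)$ I compute $\abs{\widehat{\Phi(\cdot-\vy)}}/\hat{\Phi}^{1/p}=\hat{\Phi}^{1/q}$, so the relevant integral is $\int_{\Rd}\hat{\Phi}^{p-1}\,\ud\vx<\infty$; similarly $\overline{K(\vx,\cdot)}=\Phi(\cdot-\vx)\in\Banach_{\Phi}^p(\Rd)$ reduces to $\int_{\Rd}\hat{\Phi}^{q-1}\,\ud\vx<\infty$ by evenness of $\Phi$, and between the automatic bound and the hypothesis both exponents $p-1$ and $q-1$ are integrable. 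The two reproduction identities then follow by substituting into the bilinear product above: since $\hat{\Phi}$ is real and positive, $\langle f,K(\cdot,\vy)\rangle_{\Banach}=(2\pi)^{-d/2}\int_{\Rd}\hat{f}(\vx)e^{i\vx^T\vy}\,\ud\vx=f(\vy)$ (Fourier inversion is legitimate because $\hat{f}\in\Leb_1(\Rd)$ by the same H\"older estimate), and analogously $\langle\overline{K(\vx,\cdot)},g\rangle_{\Banach}=\overline{g(\vx)}$. This establishes two-sided reproduction.

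Finally, uniform convexity and smoothness transfer along the isometric isomorphism $\Banach_{\Phi}^p(\Rd)\equiv\Leb_q(\Rd;\mu)$ from the known fact, recalled in Section~\ref{s:Banach}, that $\Leb_q(\Rd;\mu)$ is uniformly convex and smooth for $1<q<\infty$, and the case $p=q=2$ collapses the norm to that of $\Hilbert_{\Phi}(\Rd)$ in Theorem~\ref{t:RKHS-PDF}. I expect the main obstacle to be the surjectivity step together with the bookkeeping at the level of tempered distributions, namely verifying that the candidate preimage genuinely lies in $\Cont(\Rd)\cap\SI$ and that the distributional Fourier transform is inverted correctly; once that is in place, the isometry and the reproduction identities are essentially forced by H\"older's inequality and the hypothesis $\hat{\Phi}^{\min\{p,q\}-1}\in\Leb_1(\Rd)$.
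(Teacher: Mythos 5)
Your proposal is correct and follows essentially the same route as the paper's proof: the Fourier transform as an isometric isomorphism onto $\Leb_q(\Rd;\mu)$ with surjectivity via the H\"older estimate $\int_{\Rd}\abs{h}\,\ud\vx\leq C\norm{h}_{\Leb_q(\Rd;\mu)}\bigl(\int_{\Rd}\hat{\Phi}^{p/q}\,\ud\vx\bigr)^{1/p}$, the identification $\Banach_{\Phi}^p(\Rd)'\equiv\Leb_p(\Rd;\mu)\equiv\Banach_{\Phi}^q(\Rd)$, membership of the kernel sections via integrability of $\hat{\Phi}^{p-1}$ and $\hat{\Phi}^{q-1}$, direct verification of both reproduction identities, and transfer of uniform convexity and smoothness. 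The only cosmetic difference is that the paper fixes $p\geq q$ without loss of generality while you track both cases explicitly.
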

%
\begin{proof}
For convenience, we assume that $p\geq q$.
We first prove that $\Banach_{\Phi}^p(\Rd)$ and $\Leb_q(\Rd;\mu)$ are isometrically isomorphic. The Fourier transform map can be seen as a one-to-one map from $\Banach_{\Phi}^p(\Rd)$ into $\Leb_q(\Rd;\mu)$. We can check the equality of their norm
\[
\norm{f}_{\Banach_{\Phi}^p(\Rd)}=\left((2\pi)^{-d/2}\int_{\Rd}\frac{\abs{\hat{f}(\vx)}^q}{\hat{\Phi}(\vx)}\ud\vx\right)^{1/q}
=\left(\int_{\Rd}\abs{\hat{f}(\vx)}^q\ud\mu(\vx)\right)^{1/q}=
\norm{\hat{f}}_{\Leb_q(\Rd;\mu)}.
\]
So the Fourier transform map is an isometric isomorphism. Now we verify that the Fourier transform map is surjective.
Fix any $h\in\Leb_q(\Rd;\mu)$. We want to find an element in $\Banach_{\Phi}^p(\Rd)$ whose Fourier transform is equal to $h$.
We conclude that $h\in\Leb_1(\Rd)$ because
\[
\int_{\Rd}\abs{h(\vx)}\ud\vx\leq
\left(\int_{\Rd}\frac{\abs{h(\vx)}^q}{\hat{\Phi}(\vx)}\ud\vx\right)^{1/q}
\left(\int_{\Rd}\hat{\Phi}(\vx)^{p/q}\ud\vx\right)^{1/p}<\infty.
\]
Thus, the inverse Fourier transform of $h$ given as $\check{h}(\vx)=(2\pi)^{-d/2}\int_{\Rd}h(\vy)e^{i\vx^T\vy}\ud\vy$ is well-defined and an element of $\Cont(\Rd)\cap\SI$. This indicates that $\hat{\check{h}}=h$ and $\check{h}\in\Banach_{\Phi}^p(\Rd)$ because $\langle \hat{\check{h}},\gamma \rangle_{\Schwartz}=\langle h,\check{\hat{\gamma}} \rangle_{\Schwartz}=\langle h,\gamma \rangle_{\Schwartz}$ for all $\gamma\in\Schwartz$. Therefore $\Banach_{\Phi}^p(\Rd)$ is isometrically equivalent to $\Leb_q(\Rd;\mu)$.

Using $\hat{\Phi}^{q/p}=\hat{\Phi}^{q-1}\in\Leb_1(\Rd)$ we can also prove that $\Banach_{\Phi}^q(\Rd)\equiv\Leb_p(\Rd;\mu)$ in an analogous way. Therefore $\Banach_{\Phi}^q(\Rd)$ is isometrically equivalent to the dual space of $\Banach_{\Phi}^p(\Rd)$.

We fix any $\vy\in\Rd$. The Fourier transform of $K(\cdot,\vy)$ is equal to $\hat{k}_{\vy}(\vx):=\hat{\Phi}(\vx)e^{-i\vx^T\vy}$. Since $\hat{\Phi}^{p-1}\in\Leb_1(\Rd)$ we have $\hat{k}_{\vy}\in\Leb_p(\Rd;\mu)$.
Thus $K(\cdot,\vy)$ can be seen as an element of $\Banach_{\Phi}^q(\Rd)\equiv\Banach_{\Phi}^p(\Rd)'$. In addition, $\overline{K(\vx,\cdot)}\in\Banach_{\Phi}^p(\Rd)$ for any $\vx\in\Rd$ because $\hat{\Phi}^{q-1}\in\Leb_1(\Rd)$ and $\left(\overline{K(\vx,\cdot)}\right)\hat{}=\hat{k}_{\vx}\in\Leb_q(\Rd;\mu)$ by $\Phi=\overline{\Phi(-\cdot)}$.

Finally, we verify the right-sided reproduction. Fix any $f\in\Banach_{\Phi}^p(\Rd)$ and $\vy\in\Rd$. We can verify that $\hat{f}\in\Leb_1(\Rd)$ as in the above proof.
Moreover, the continuity of $f$ and $\check{\hat{f}}$ allows us to recover $f$ pointwise from its Fourier transform via
\[
f(\vx)=\check{\hat{f}}(\vx)=(2\pi)^{-d/2}\int_{\Rd}\hat{f}(\vy)e^{i\vx^T\vy}\ud\vy.
\]
Thus, we have
\begin{align*}
&\langle f,K(\cdot,\vy) \rangle_{\Banach_{\Phi}^p(\Rd)}
=\langle \hat{f},\hat{k}_{\vy} \rangle_{\Leb_q(\Rd;\mu)}
=\int_{\Rd}\hat{f}(\vx)\overline{\hat{k}_{\vy}(\vx)}\ud\mu(\vx)\\
=&(2\pi)^{-d/2}\int_{\Rd}\frac{\hat{f}(\vx)\overline{\hat{\Phi}(\vx)e^{-i\vx^T\vy}}}{\hat{\Phi}(\vx)}\ud\vx
=(2\pi)^{-d/2}\int_{\Rd}\hat{f}(\vx)e^{i\vx^T\vy}\ud\vx=f(\vy).
\end{align*}
In the same way, we can also verify that $\Banach_{\Phi}^p(\Rd)$ satisfies the left-sided reproduction property, i.e.,
\[
\langle \overline{K(\vx,\cdot)},g \rangle_{\Banach_{\Phi}^p(\Rd)}
=\langle \hat{k}_{\vx},\hat{g} \rangle_{\Leb_q(\Rd;\mu)}
=\overline{\int_{\Rd}\hat{g}(\vy)\overline{\hat{k}_{\vx}(\vy)}\ud\mu(\vy)}
=\overline{g(\vx)},
\]
for all $g\in\Banach_{\Phi}^q(\Rd)\equiv\Banach_{\Phi}^p(\Rd)'$ and all $\vx\in\Rd$.
Therefore $\Banach_{\Phi}^p(\Rd)$ is an RKBS with the two-sided reproducing kernel $K$.

Since $\Banach_{\Phi}^p(\Rd)\equiv\Leb_q(\Rd;\mu)$ is reflexive and $K$ is even,
the dual space $\Banach_{\Phi}^p(\Rd)'\equiv\Banach_{\Phi}^q(\Rd)$ is also an RKBS with the two-sided reproducing kernel $K$.

Because $\Leb_q(\Rd;\mu)$ and $\Leb_p(\Rd;\mu)$ are uniformly convex and smooth by \cite[Theorem~5.2.11 and Example~5.4.8]{Megginson1998}. $\Banach_{\Phi}^p(\Rd)$ and $\Banach_{\Phi}^q(\Rd)$ are also uniformly convex and smooth.
\end{proof}

\begin{remark}\label{r:RKBS-PDF}
We can combine our result with \cite[Proposition~1.9.3]{Megginson1998} to conclude that the restriction of $\Banach_{\Phi}^p(\Rd)$ to the reals is also an RKBS with the two-sided reproducing kernel $K$ and its dual is isometrically equivalent to the restriction of $\Banach_{\Phi}^q(\Rd)$ to the reals.
It is well-known that the RKHS of a given reproducing kernel is unique. Theorem~\ref{t:RKBS-PDF}, however, shows that different RKBSs may have the same reproducing kernel. We will provide an example for this in Section~\ref{s:Matern}.
Moreover, the proof of Theorem~\ref{t:RKBS-PDF} provides that $\Banach_{\Phi}^p(\Rd)$ with $p\geq 2$ is still a right-sided RKBS without the additional condition $\hat{\Phi}^{q-1}\in\Leb_1(\Rd)$.

According to \cite[Theorem~10.10]{Wendland2005} any positive definite kernel can be used to construct an RKHS. We may extend the positive definite kernel into an RKBS.
\end{remark}

\begin{corollary}\label{c:RKBS-Lp-Lq}
Let $\Banach_{\Phi}^p(\Rd)$ with $p\geq2$ be defined in Theorem~\ref{t:RKBS-PDF}. Then $\Banach_{\Phi}^p(\Rd)\subseteq\Leb_p(\Rd)$.
\end{corollary}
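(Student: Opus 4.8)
The plan is to reduce the inclusion to a single application of the Hausdorff--Young inequality. For $f\in\Banach_{\Phi}^p(\Rd)$ the only quantitative information available is that $\hat{f}\big/\hat{\Phi}^{1/q}\in\Leb_q(\Rd)$, so the first task is to upgrade this weighted integrability to plain integrability of $\hat{f}$. Since $\Phi\in\Leb_1(\Rd)$, its Fourier transform $\hat{\Phi}$ is bounded; writing $M:=\norm{\hat{\Phi}}_{\infty}<\infty$ and using the pointwise factorization $\abs{\hat{f}}^q=\bigl(\abs{\hat{f}}^q/\hat{\Phi}\bigr)\hat{\Phi}$, I would estimate
\[
\int_{\Rd}\abs{\hat{f}(\vx)}^q\ud\vx\leq M\int_{\Rd}\frac{\abs{\hat{f}(\vx)}^q}{\hat{\Phi}(\vx)}\ud\vx=M(2\pi)^{d/2}\norm{f}_{\Banach_{\Phi}^p(\Rd)}^q<\infty,
\]
so that $\hat{f}\in\Leb_q(\Rd)$.

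Next I would invoke the hypothesis $p\geq2$, which under $p^{-1}+q^{-1}=1$ is equivalent to $1<q\leq2$. This is exactly the range in which the Hausdorff--Young inequality applies, so that the (inverse) Fourier transform maps $\Leb_q(\Rd)$ boundedly into $\Leb_p(\Rd)$. As already established in the proof of Theorem~\ref{t:RKBS-PDF}, every $f\in\Banach_{\Phi}^p(\Rd)$ satisfies $\hat{f}\in\Leb_1(\Rd)$ and is recovered pointwise as $f=\check{\hat{f}}$; hence $f$ is literally the inverse Fourier transform of the $\Leb_q$-function $\hat{f}$. Applying Hausdorff--Young to $\hat{f}$ then yields $f\in\Leb_p(\Rd)$ together with a bound $\norm{f}_{\Leb_p(\Rd)}\leq C\norm{\hat{f}}_{\Leb_q(\Rd)}$, which gives the desired inclusion $\Banach_{\Phi}^p(\Rd)\subseteq\Leb_p(\Rd)$.

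Neither step presents a serious obstacle; the only thing requiring care is recognizing where each hypothesis is consumed. The integrability assumption $\Phi\in\Leb_1(\Rd)$ is precisely what makes $\hat{\Phi}$ bounded and thereby permits the passage from the weighted norm to $\norm{\hat{f}}_{\Leb_q(\Rd)}$, while the restriction $p\geq2$ (equivalently $q\leq2$) is exactly the admissibility condition for Hausdorff--Young. For $p<2$ the argument breaks down, since $\hat{f}\in\Leb_q(\Rd)$ with $q>2$ no longer controls $\norm{f}_{\Leb_p(\Rd)}$, which is consistent with the corollary being stated only for $p\geq2$.
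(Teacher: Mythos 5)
Your proposal is correct and follows essentially the same route as the paper: bound $\int_{\Rd}\abs{\hat{f}}^q\ud\vx$ by the weighted integral times $\sup\hat{\Phi}$ to get $\hat{f}\in\Leb_q(\Rd)$, then apply the Hausdorff--Young inequality with $1<q\leq 2$ to conclude $f=\check{\hat{f}}\in\Leb_p(\Rd)$. Your closing remarks about where each hypothesis is consumed, and why the argument fails for $p<2$, match the paper's own remark following the corollary.
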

%
\begin{proof}
We fix any $f\in\Banach_{\Phi}^p(\Rd)$. According to the proof of Theorem~\ref{t:RKBS-PDF}, we have $\hat{f}\in\Leb_q(\Rd)$ because
\[
\int_{\Rd}\abs{\hat{f}(\vx)}^q\ud\vx\leq(2\pi)^{qd/2}\left(\int_{\Rd}\frac{\abs{\hat{f}(\vx)}^q}{\hat{\Phi}(\vx)}\ud\vx\right)
\left(\sup_{\vx\in\Rd}\hat{\Phi}(\vx)\right)<\infty.
\]
The Hausdorff-Young inequality~\cite[Theorem~7.1.13]{Hormander2003I} provides that $f=\check{\hat{f}}\in\Leb_p(\Rd)$ because $1<q\leq 2$.
\end{proof}

\begin{remark}
The RKBS $\Banach_{\Phi}^p(\Rd)$ with $p\geq2$ can be precisely written as
\[
\begin{split}
\Banach_{\Phi}^p(\Rd):=&\left\{f\in\Leb_p(\Rd)\cap\Cont(\Rd): \ \text{the distributional Fourier transform $\hat{f}$ of $f$}\right.\\
&\left.\text{ is a measurable function defined on $\Rd$ such that }
\hat{f}\big/\hat{\Phi}^{1/q}\in\Leb_q(\Rd)\right\}.
\end{split}
\]
However, $\Banach_{\Phi}^p(\Rd)\not\subseteq\Leb_p(\Rd)$ with $1<p<2$ because the Hausdorff-Young inequality does not work for $q>2$.
\end{remark}

We fix any positive number $m>d/2$. According to \cite[Corollary~10.13]{Wendland2005}, if there are two positive constants $C_1,C_2$ such that
\[
C_1\left(1+\norm{\vx}_2^2\right)^{-m/2}\leq\hat{\Phi}(\vx)^{1/2}\leq C_2\left(1+\norm{\vx}_2^2\right)^{-m/2},
\quad \vx\in\Rd,
\]
then the RKHS $\Banach_{\Phi}^2(\Rd)\equiv\Hilbert_{\Phi}(\Rd)$ and the classical $\Leb_2$-based Sobolev space $W_2^m(\Rd)\equiv\Hilbert^m(\Rd)$ of order $m$ are isomorphic, i.e., $\Hilbert_{\Phi}(\Rd)\cong\Hilbert^m(\Rd)$.

Following the ideas of RKHSs, we can also find a relationship between RKBSs and Sobolev spaces. Let $f_m(\vx):=\left(1+\norm{\vx}_2^2\right)^{m/2}\hat{f}(\vx)$ with $p\geq2$. The theory of singular integrals then shows that $f$ belongs to the classical $\Leb_p$-based Sobolev space $W_p^m(\Rd)$ of order $m$ if any only if the function $f_m$ is the Fourier transform of some function in $\Leb_p(\Rd)$, and the $\Leb_p$-norm of the inverse Fourier transform $f_m$ is equivalent to the $W_p^m$-norm of $f$ (much more detail is mentioned in~\cite[Section~7.63]{AdamsFournier2003} and \cite[Section~7.9]{Hormander2003I}). Using the Hausdorff-Young inequality, we can get $\norm{f}_{W_p^m(\Rd)}\leq C\norm{\check{f}_m}_{\Leb_p(\Rd)}\leq C\norm{f_m}_{\Leb_q(\Rd)}$ for some positive constant $C$ independent of $f$. Following these statements, we can introduce the following corollary.

\begin{corollary}\label{c:RKBS-PDF-Sobolev}
Let the positive definite function $\Phi$ be as in Theorem~\ref{t:RKBS-PDF} and $W_p^m(\Rd)$ be the classical $\Leb_p$-based Sobolev space of order $m>pd/q-d/q$. Here $q$ is the conjugate exponent of $p\geq2$.
If there are two positive constants $C_1,C_2$ such that
\[
C_1\left(1+\norm{\vx}_2^2\right)^{-m/2}\leq\hat{\Phi}(\vx)^{1/q}\leq C_2\left(1+\norm{\vx}_2^2\right)^{-m/2},
\quad \vx\in\Rd,
\]
then $\Banach_{\Phi}^p(\Rd)$ is embedded into $W_p^m(\Rd)$, i.e.,
\[
\norm{f}_{W_p^m(\Rd)}\leq C\norm{f}_{\Banach_{\Phi}^p(\Rd)},\quad f\in\Banach_{\Phi}^p(\Rd)\subseteq W_p^m(\Rd),
\]
for some positive constant $C$ independent on $f$.
\end{corollary}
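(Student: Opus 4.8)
The plan is to take as given the reduction prepared in the running text just before the statement, namely the estimate
\[
\norm{f}_{W_p^m(\Rd)}\leq C\norm{\check{f}_m}_{\Leb_p(\Rd)}\leq C\norm{f_m}_{\Leb_q(\Rd)},
\]
where $f_m(\vx):=\left(1+\norm{\vx}_2^2\right)^{m/2}\hat{f}(\vx)$ and the second inequality is the Hausdorff-Young inequality (legitimate because $p\geq2$ forces $1<q\leq2$). With this in hand, the corollary reduces to controlling the weighted norm $\norm{f_m}_{\Leb_q(\Rd)}$ by the native norm $\norm{f}_{\Banach_{\Phi}^p(\Rd)}$, and I would carry this out by a purely pointwise comparison of the Bessel weight $\left(1+\norm{\vx}_2^2\right)^{m/2}$ against $\hat{\Phi}^{-1/q}$.

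The key step is to raise the upper inequality in the hypothesis to the $q$-th power. From $\hat{\Phi}(\vx)^{1/q}\leq C_2\left(1+\norm{\vx}_2^2\right)^{-m/2}$ I get $\hat{\Phi}(\vx)\leq C_2^q\left(1+\norm{\vx}_2^2\right)^{-mq/2}$, hence the pointwise bound
\[
\left(1+\norm{\vx}_2^2\right)^{mq/2}\leq\frac{C_2^q}{\hat{\Phi}(\vx)},\quad \vx\in\Rd.
\]
Substituting this into the definition of $f_m$ then yields
\[
\norm{f_m}_{\Leb_q(\Rd)}^q=\int_{\Rd}\left(1+\norm{\vx}_2^2\right)^{mq/2}\abs{\hat{f}(\vx)}^q\ud\vx
\leq C_2^q\int_{\Rd}\frac{\abs{\hat{f}(\vx)}^q}{\hat{\Phi}(\vx)}\ud\vx
=C_2^q(2\pi)^{d/2}\norm{f}_{\Banach_{\Phi}^p(\Rd)}^q.
\]
Chaining this with the prepared estimate produces $\norm{f}_{W_p^m(\Rd)}\leq C'\norm{f}_{\Banach_{\Phi}^p(\Rd)}$ with $C'=C\,C_2(2\pi)^{d/(2q)}$, which is exactly the asserted embedding; the finiteness of the right-hand side simultaneously shows $f_m\in\Leb_q(\Rd)$, so $\check{f}_m\in\Leb_p(\Rd)$ and therefore $f\in W_p^m(\Rd)$, giving the inclusion $\Banach_{\Phi}^p(\Rd)\subseteq W_p^m(\Rd)$.

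I expect no single hard obstacle here; the genuine content sits in the prepared reduction, i.e. the characterization of $W_p^m(\Rd)$ by Bessel potentials from the theory of singular integrals together with the Hausdorff-Young inequality, both of which I would simply cite. Two points deserve care rather than difficulty. First, only the \emph{upper} bound $\hat{\Phi}^{1/q}\leq C_2(1+\norm{\vx}_2^2)^{-m/2}$ enters the embedding estimate; the lower bound involving $C_1$ is not needed for this one-sided inclusion (it would be the ingredient for a reverse embedding). Second, the order condition $m>pd/q-d/q$ is not used in the estimate itself but secures consistency with Theorem~\ref{t:RKBS-PDF}: since $p-1=p/q$ and $q-1=q/p$, the decay bound gives $\hat{\Phi}^{q-1}(\vx)\asymp(1+\norm{\vx}_2^2)^{-mq(q-1)/2}$, and the integrability $mq(q-1)>d$ is equivalent to $m>(p-1)d/q=pd/q-d/q$, which is precisely the requirement $\hat{\Phi}^{\min\{p,q\}-1}\in\Leb_1(\Rd)$ that makes $\Banach_{\Phi}^p(\Rd)$ an RKBS in the first place.
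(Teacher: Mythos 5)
Your proposal is correct and follows essentially the same route as the paper, which establishes the corollary via the Bessel-potential characterization of $W_p^m(\Rd)$ together with the Hausdorff--Young inequality in the paragraph preceding the statement and leaves the final weight comparison implicit. Your explicit pointwise estimate $\left(1+\norm{\vx}_2^2\right)^{mq/2}\leq C_2^q\big/\hat{\Phi}(\vx)$ is exactly the missing step, and your side remarks --- that only the upper bound on $\hat{\Phi}^{1/q}$ is needed and that the condition $m>pd/q-d/q$ encodes $\hat{\Phi}^{q/p}\in\Leb_1(\Rd)$ --- agree with Remark~\ref{r:RKBS-PDF-Sobolev}.
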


\begin{remark}\label{r:RKBS-PDF-Sobolev}
Here the lower bound for $m$ is induced by the condition that $\hat{\Phi}^{q/p}\in\Leb_1(\Rd)$.
According to Corollary~\ref{c:RKBS-PDF-Sobolev}, the dual space $W_q^{-m}(\Rd)$ of the Sobolev space $W_p^m(\Rd)$ is embedded into the dual space $\Banach_{\Phi}^p(\Rd)'$ of the RKBS $\Banach_{\Phi}^p(\Rd)$. It is well-known that the point evaluation functional $\delta_{\vx}$ belongs to $W_q^{-m}(\Rd)$ (see \cite[Section~3.25]{AdamsFournier2003}) which coincides with $\delta_{\vx}\in\Banach_{\Phi}^p(\Rd)'$.
\end{remark}

Since $K(\cdot,\vx_1),\ldots,K(\cdot,\vx_N)$ are linearly independent in $\Banach_{\Phi}^q(\Rd)\equiv\Banach_{\Phi}^p(\Rd)'$ for any pairwise distinct data points $X=\left\{\vx_1,\ldots,\vx_N\right\}\subseteq\Rd$, $\delta_{\vx_1},\ldots,\delta_{\vx_N}$ are linearly independent on $\Banach_{\Phi}^p(\Rd)$.
Combining Theorems~\ref{t:RKBS-opt-rep} and~\ref{t:RKBS-PDF}, we can solve the empirical SVM solution in $\Banach_{\Phi}^p(\Rd)$ with $p>1$.

\begin{theorem}\label{t:RKBS-PDF-opt-rep}
Let $\Banach_{\Phi}^p(\Rd)$ with $p>1$ be defined as in Theorem~\ref{t:RKBS-PDF} and the regularization function $R:[0,\infty)\to[0,\infty)$ be convex and strictly increasing. We choose the loss function $L:\Rd\times\CC\times\CC\to[0,\infty)$ such that $L(\vx,y,\cdot)$ is a convex map for any fixed $\vx\in\Rd$ and any fixed $y\in\CC$.
Given the data $D:=\left\{\left(\vx_1,y_1\right),\ldots,\left(\vx_N,y_N\right)\right\}$ with pairwise distinct data points $X=\left\{\vx_1,\ldots,\vx_N\right\}\subseteq\Rd$ and associated data values $Y=\left\{y_1,\ldots,y_N\right\}\subset\CC$,  the unique optimal solution (support vector machine solution) $s_{D,L,R}$ of
\begin{equation}\label{e:svm-PDF}
\min_{f\in\Banach_{\Phi}^p(\Rd)}\sum_{j=1}^NL\left(\vx_j,y_j,f(\vx_j)\right)+R\left(\norm{f}_{\Banach_{\Phi}^p(\Rd)}\right),
\end{equation}
has the explicit representation
\begin{equation}\label{e:svm-PDF-coef}
s_{D,L,R}(\vx)=(2\pi)^{-d/2}\int_{\Rd}\hat{\Phi}(\vy)^{p-1}\sum_{k=1}^Nc_ke^{i(\vx-\vx_k)^T\vy}\abs{\sum_{l=1}^Nc_le^{-i\vx_l^T\vy}}^{p-2}\ud\vy,
\quad \vx\in\Rd,
\end{equation}
for some coefficients $c_1,\ldots,c_N\in\CC$ and $i^2=-1$.
\end{theorem}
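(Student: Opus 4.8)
The plan is to start from the dual-element representation that Theorem~\ref{t:RKBS-opt-rep} already supplies and then to \emph{invert} the (nonlinear) normalized duality mapping explicitly by transferring everything to the Fourier side. Since $\Banach_{\Phi}^p(\Rd)$ is uniformly convex and smooth by Theorem~\ref{t:RKBS-PDF}, Theorem~\ref{t:RKBS-opt-rep} applies verbatim and gives a unique minimizer $s_{D,L,R}$ whose dual element is
\[
s_{D,L,R}^{\ast}(\vx)=\sum_{k=1}^N c_k K(\vx,\vx_k)=\sum_{k=1}^N c_k\Phi(\vx-\vx_k),\quad \vx\in\Rd,
\]
viewed as an element of $\Banach_{\Phi}^q(\Rd)\equiv\Banach_{\Phi}^p(\Rd)'$. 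The essential point -- and the source of the correction announced in Remark~\ref{r:correction-mistake} -- is that the normalized duality mapping is \emph{not} linear, so $s_{D,L,R}$ is not this kernel expansion; recovering it demands the explicit inverse duality map, which is where the factor $\abs{\cdot}^{p-2}$ enters.

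First I would move to $\Leb_p(\Rd;\mu)$. By Theorem~\ref{t:RKBS-PDF} the Fourier transform is a linear isometric isomorphism from $\Banach_{\Phi}^q(\Rd)$ onto $\Leb_p(\Rd;\mu)$ and from $\Banach_{\Phi}^p(\Rd)$ onto $\Leb_q(\Rd;\mu)$, and a linear isometry transports the semi-inner-product of Section~\ref{s:Banach} and hence the normalized duality mapping across. Using $\widehat{\Phi(\cdot-\vx_k)}(\vy)=\hat{\Phi}(\vy)e^{-i\vx_k^T\vy}$, I would compute the transform of the dual element, writing $\hat g:=\widehat{s_{D,L,R}^{\ast}}$:
\[
\hat g(\vy)=\hat{\Phi}(\vy)\sum_{k=1}^N c_k e^{-i\vx_k^T\vy}\in\Leb_p(\Rd;\mu).
\]

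Next, since uniform convexity forces reflexivity (Milman--Pettis) and the identity $[f^{\ast},g^{\ast}]_{\Banach'}=[g,f]_{\Banach}$ of Section~\ref{s:Banach} makes the duality mapping of $\Banach_{\Phi}^q(\Rd)$ the inverse of that of $\Banach_{\Phi}^p(\Rd)$, we have $s_{D,L,R}=(s_{D,L,R}^{\ast})^{\ast}$. Applying the explicit $\Leb_p$ dual-element formula $h^{\ast}=h\abs{h}^{p-2}\big/\norm{h}_{\Leb_p(\Rd;\mu)}^{p-2}$ to $h=\hat g$ and using $\hat{\Phi}\geq0$ yields
\[
\widehat{s_{D,L,R}}(\vy)=\frac{1}{\norm{\hat g}_{\Leb_p(\Rd;\mu)}^{p-2}}\,\hat{\Phi}(\vy)^{p-1}\Bigl(\sum_{k=1}^N c_k e^{-i\vx_k^T\vy}\Bigr)\abs{\sum_{l=1}^N c_l e^{-i\vx_l^T\vy}}^{p-2}.
\]
Because $s_{D,L,R}\in\Banach_{\Phi}^p(\Rd)$, the proof of Theorem~\ref{t:RKBS-PDF} shows $\widehat{s_{D,L,R}}\in\Leb_1(\Rd)$ and that $s_{D,L,R}$ is recovered pointwise by the inverse Fourier transform, giving the stated integral up to the positive scalar $\norm{\hat g}_{\Leb_p(\Rd;\mu)}^{-(p-2)}$.

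Finally I would absorb that scalar into the coefficients. The integrand in~\eqref{e:svm-PDF-coef} is positively homogeneous of degree $p-1$ in the vector $\left(c_1,\cdots,c_N\right)^T$, since replacing $c_k$ by $\lambda c_k$ with $\lambda>0$ multiplies $\sum_k c_k e^{i(\vx-\vx_k)^T\vy}$ by $\lambda$ and $\abs{\sum_l c_l e^{-i\vx_l^T\vy}}^{p-2}$ by $\lambda^{p-2}$; choosing $\lambda=\norm{\hat g}_{\Leb_p(\Rd;\mu)}^{-(p-2)/(p-1)}$ clears the prefactor and produces exactly~\eqref{e:svm-PDF-coef}. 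I expect the main obstacle to be the careful bookkeeping of conjugations and antilinearity when transporting the nonlinear duality mapping through the Fourier isometry and confirming $(s_{D,L,R}^{\ast})^{\ast}=s_{D,L,R}$ under reflexivity -- precisely the delicate point underlying Remark~\ref{r:correction-mistake}; the remaining steps are the known $\Leb_p$ dual-element formula and the pointwise Fourier inversion already established in Theorem~\ref{t:RKBS-PDF}.
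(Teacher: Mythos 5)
Your proposal is correct and follows essentially the same route as the paper: invoke Theorem~\ref{t:RKBS-opt-rep} for the kernel expansion of the dual element, transport it through the Fourier isometry to $\Leb_p(\Rd;\mu)$, apply the explicit $\Leb_p$ normalized-duality-mapping formula $h\mapsto h\abs{h}^{p-2}\big/\norm{h}_{\Leb_p(\Rd;\mu)}^{p-2}$, invert the Fourier transform pointwise, and absorb the norm prefactor into the coefficients via the degree-$(p-1)$ homogeneity (the paper records this as $c_k=\norm{f_s}_{\Leb_p(\Rd;\mu)}^{(2-p)/(p-1)}b_k$). Your explicit justification that $(s_{D,L,R}^{\ast})^{\ast}=s_{D,L,R}$ via reflexivity and $[f^{\ast},g^{\ast}]_{\Banach'}=[g,f]_{\Banach}$ is the same step the paper states as the dual element of $s_{D,L,R}^{\ast}$ being the identity element of $f_s^{\ast}$ in $\Banach_{\Phi}^p(\Rd)$.
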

%
\begin{proof}

Using Theorems~\ref{t:RKBS-opt-rep} and~\ref{t:RKBS-PDF}, the dual element of the SVM solution $s_{D,L,R}$ of the SVM~\eqref{e:svm-PDF} is a linear combination of $K(\cdot,\vx_1),\ldots,K(\cdot,\vx_N)$, i.e.,
\[
s_{D,L,R}^{\ast}(\vx)=\sum_{k=1}^{N}b_kK(\vx,\vx_k)=\sum_{k=1}^Nb_k\Phi(\vx-\vx_k),\quad\vx\in\Rd,~\vb:=\left(b_1,\cdots,b_N\right)^T\in\CC^N.
\]
Suppose that $s_{D,L,R}$ is not trivial. According to the proof of Theorem~\ref{t:RKBS-PDF},
the identity element of $s_{D,L,R}^{\ast}\in\Banach_{\Phi}^q(\Rd)$ in $\Leb_p(\Rd;\mu)$ is the Fourier transform of $s_{D,L,R}^{\ast}$, i.e.,
\[
f_s(\vx):=\Fourier\left(s_{D,L,R}^{\ast}\right)(\vx)
=\sum_{k=1}^Nb_k\hat{\Phi}(\vx)e^{-i\vx^T\vx_k},\quad \vx\in\Rd.
\]
The dual element of $f_s\in\Leb_p(\Rd;\mu)$ in $\Leb_q(\Rd;\mu)$ has the form
\[
f_s^{\ast}(\vx)=\frac{f_s(\vx)\abs{f_s(\vx)}^{p-2}}{\norm{f_s}_{\Leb_p(\Rd;\mu)}^{p-2}},\quad \vx\in\Rd.
\]
Because the dual element of $s_{D,L,R}^{\ast}$ in $\Banach_{\Phi}^p(\Rd)$ is equal to the identity element of $f_s^{\ast}\in\Leb_q(\Rd;\mu)$ in $\Banach_{\Phi}^p(\Rd)$, which is the inverse Fourier transfer of $f_s^{\ast}$, we can determine that
\[
s_{D,L,R}(\vx)=\Fourier^{-1}\left(f_s^{\ast}\right)(\vx)
=(2\pi)^{-d/2}\int_{\Rd}\hat{\Phi}(\vy)^{p-1}\sum_{k=1}^Nc_ke^{i(\vx-\vx_k)^T\vy}\abs{\sum_{l=1}^Nc_le^{-i\vx_l^T\vy}}^{p-2}\ud\vy,
\]
and the coefficients are given by $c_k:=\norm{f_s}_{\Leb_p(\Rd;\mu)}^{\frac{2-p}{p-1}}b_k=\norm{s_{D,L,R}}_{\Banach_{\Phi}^p(\Rd)}^{q-2}b_k$ for all $k=1,\ldots,N$, where $q$ is the conjugate exponent of $p$.

\end{proof}

\begin{remark}\label{r:RKBS-PDF-opt-rep}
In particular, if $p$ is an even positive integer, then $s_{D,L,R}$ is also a linear combination of some kernel function translated to the data points $X$. For example, when $p=4$, then
\begin{align*}
s_{D,L,R}=\sum_{k_1,k_2,k_3=1}^{N,N,N}c_{k_1}\overline{c_{k_2}}c_{k_3}\Phi_3\left(\cdot-\vx_{k_1}+\vx_{k_2}-\vx_{k_3}\right)
=\sum_{k_1,k_2,k_3=1}^{N,N,N}c_{k_1}\overline{c_{k_2}}c_{k_3}\Ker_3\left(\cdot,\vx_{k_1},\vx_{k_2},\vx_{k_3}\right),
\end{align*}
where the kernel function $\Ker_3(\vx,\vy_1,\vy_2,\vy_3):=\Phi_3(\vx-\vy_1+\vy_2-\vy_3)$ and $\Phi_3$ is the inverse Fourier transform of $\hat{\Phi}^3$. Moreover,
\[
\begin{split}
&\norm{s_{D,L,R}}_{\Banach_{\Phi}^p(\Rd)}^{4/3}
=\norm{s_{D,L,R}}_{\Banach_{\Phi}^p(\Rd)}^{-2/3}[s_{D,L,R},s_{D,L,R}]_{\Banach_{\Phi}^p(\Rd)}
=\norm{s_{D,L,R}}_{\Banach_{\Phi}^p(\Rd)}^{-2/3}\langle s_{D,L,R},s_{D,L,R}^{\ast} \rangle_{\Banach_{\Phi}^p(\Rd)}\\
=&\sum_{j=1}^N\overline{c_j}\langle s_{D,L,R},K(\cdot,\vx_j) \rangle_{\Banach_{\Phi}^p(\Rd)}
=\sum_{j,k_1,k_2,k_3=1}^{N,N,N,N}\overline{c_j}c_{k_1}\overline{c_{k_2}}c_{k_3}\Ker_3\left(\vx_j,\vx_{k_1},\vx_{k_2},\vx_{k_3}\right).
\end{split}
\]
\end{remark}

We can observe that the coefficients of the SVM solution $s_{D,L,R}$ given in Theorem~\ref{t:RKBS-PDF-opt-rep} differ from the coefficients of its dual element $s_{D,L,R}^{\ast}$ only by a constant factor.
As in Corollary~\ref{c:RKBS-opt-coef-fixed-point}, the coefficients of $s_{D,L,R}$ can also be computed by the fixed point iteration method. For any fixed $\vc:=\left(c_1,\cdots,c_N\right)^T\in\CC^N$, we can define a unique function $s_{\vc}\in\Banach_{\Phi}^p(\Rd)$ as in Equation~\eqref{e:svm-PDF-coef}. Let
\[
\phi_j(\vc):=s_{\vc}(\vx_j)
=(2\pi)^{-d/2}\int_{\Rd}\hat{\Phi}(\vy)^{p-1}\sum_{k=1}^Nc_ke^{i(\vx_j-\vx_k)^T\vy}\abs{\sum_{l=1}^Nc_le^{-i\vx_l^T\vy}}^{p-2}\ud\vy,
\quad \vc\in\CC^N,
\]
for all $j=1,\ldots,N$, and $\vphi:=\left(\phi_1,\cdots,\phi_N\right)^T$. Thus we have
\[
\norm{s_{\vc}}_{\Banach_{\Phi}^p(\Rd)}^{q}
=\norm{s_{\vc}}_{\Banach_{\Phi}^p(\Rd)}^{q-2}\langle s_{\vc},s_{\vc}^{\ast} \rangle_{\Banach_{\Phi}^p(\Rd)}
=\sum_{j=1}^N\overline{c_j}\langle s_{\vc},K(\cdot,\vx_j) \rangle_{\Banach_{\Phi}^p(\Rd)}=\vc^{\ast}\vphi(\vc).
\]
Here $q$ is the conjugate exponent of $p$. Denote that
\[
\mT_{D,L,R}(\vc):=\sum_{j=1}^NL\left(\vx_j,y_j,\phi_j(\vc)\right)+R\left(\left(\vc^{\ast}\vphi(\vc)\right)^{1/q}\right)
=\sum_{j=1}^NL\left(\vx_j,y_j,s_{\vc}(\vx_j)\right)+R\left(\norm{s_{\vc}}_{\Banach_{\Phi}^p(\Rd)}\right).
\]
It is easy to check that the coefficients of $s_{D,L,R}$ are the minimizers of $\mT_{D,L,R}$ over $\CC^N$, i.e.,
\[
\vc_{opt}:=\underset{\vc\in\CC^N}{\text{argmin }}\mT_{D,L,R}(\vc)\text{ such that }s_{D,L,R}=s_{\vc_{opt}}.
\]
Suppose that $L(\vx,y,\cdot)\in\Cont^1(\CC)$ for all $\vx\in\Rd$ and all $y\in\CC$, $R\in\Cont^1([0,\infty))$ and $p\geq2$.
We can compute the gradient of $\mT_{D,L,R}$ by Wirtinger partial derivatives in the form
\[
\nabla\mT_{D,L,R}(\vc)^T=\vl'_{D}\left(\vphi(\vc)\right)^T\nabla\vphi(\vc)
+\frac{R'\left(\left(\vc^{\ast}\vphi(\vc)\right)^{1/q}\right)}{2q\left(\vc^{\ast}\vphi(\vc)\right)^{1/p}}\vc^{\ast}\nabla\vphi(\vc),
\]
where
$\vl'_D\left(\vphi\right):=\left(L'(\vx_1,y_1,\phi_1),\cdots,L'(\vx_N,y_N,\phi_N)\right)^T$ and the entries of the Jacobian (gradient) matrix $\nabla\vphi:=\left(\frac{\partial}{\partial c_k}\phi_j\right)_{j,k=1}^{N,N}$ by Wirtinger partial derivatives have the forms
\[
\frac{\partial}{\partial c_k}\phi_j(\vc)=
\frac{p}{2}(2\pi)^{-d/2}
\int_{\Rd}\hat{\Phi}(\vy)^{p-1}e^{i(\vx_j-\vx_k)^T\vy}
\abs{\sum_{l=1}^Nc_le^{-i\vx_l^T\vy}}^{p-2}
\ud\vy.
\]
Moreover, $\vc_{opt}$ is the stationary point of $\nabla\mT_{D,L,R}$ which indicates that $\vc_{opt}$ is a fixed point of the function
\begin{equation}\label{e:opt-coef-PDF-fixed-point}
F_{D,L,R}(\vc):=\vc+\nabla\mT_{D,L,R}(\vc),\quad \vc\in\CC^N.
\end{equation}
Therefore, we can introduce the following corollary.

\begin{corollary}\label{c:RKBS-PDF-opt-coef-fixed-point}

Suppose that the loss function $L(\vx,y,\cdot)\in\Cont^1(\CC)$ for all $\vx\in\Rd$ and all $y\in\CC$, the regularization function $R\in\Cont^1([0,\infty))$ and $p\geq2$.
Then the coefficient vector $\vc$ of the support vector machine solution $s_{D,L,R}$ given in Theorem~\ref{t:RKBS-PDF-opt-rep} is a fixed point of the function $F_{D,L,R}$ defined in Equation~\eqref{e:opt-coef-PDF-fixed-point}, i.e., $F_{D,L,R}(\vc)=\vc$.

\end{corollary}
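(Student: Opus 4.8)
The plan is to mirror the argument already used for Corollary~\ref{c:RKBS-opt-coef-fixed-point}, exploiting the fact that the discussion preceding the statement has reduced the infinite-dimensional SVM~\eqref{e:svm-PDF} to the finite-dimensional minimization of the real-valued objective $\mT_{D,L,R}$ over $\CC^N$. By Theorem~\ref{t:RKBS-PDF-opt-rep} every candidate minimizer of~\eqref{e:svm-PDF} is of the form $s_{\vc}$ for some $\vc\in\CC^N$, and by construction $\mT_{D,L,R}(\vc)$ equals the value of the SVM objective at $s_{\vc}$; hence the coefficient vector $\vc_{opt}$ of $s_{D,L,R}=s_{\vc_{opt}}$ is a global minimizer of $\mT_{D,L,R}$. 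Once we know that $\mT_{D,L,R}$ is continuously differentiable near $\vc_{opt}$, the first-order optimality condition forces the Wirtinger gradient to vanish there (which for the real-valued $\mT_{D,L,R}$ is equivalent to stationarity in the underlying real variables), i.e.\ $\nabla\mT_{D,L,R}(\vc_{opt})=\v0$, and substituting into the definition~\eqref{e:opt-coef-PDF-fixed-point} yields $F_{D,L,R}(\vc_{opt})=\vc_{opt}+\v0=\vc_{opt}$, which is exactly the claimed fixed-point identity.

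The substantive work therefore lies in verifying that $\mT_{D,L,R}\in\Cont^1\!\left(\CC^N\backslash\{\v0\}\right)$ and in justifying the explicit gradient formula displayed before the statement. I would split $\mT_{D,L,R}$ into the loss part $\sum_{j=1}^N L(\vx_j,y_j,\phi_j(\vc))$ and the regularization part $R\!\left((\vc^{\ast}\vphi(\vc))^{1/q}\right)$. For the loss part one first shows $\phi_j\in\Cont^1$ by differentiating under the integral sign in the representation $\phi_j(\vc)=s_{\vc}(\vx_j)$; here the hypothesis $p\geq2$ is essential, since it makes the factor $\abs{\sum_{l}c_le^{-i\vx_l^T\vy}}^{p-2}$ and its Wirtinger derivatives continuous, while the integrability of $\hat{\Phi}^{p-1}$ (guaranteed by $\hat{\Phi}^{\min\{p,q\}-1}\in\Leb_1(\Rd)$ in Theorem~\ref{t:RKBS-PDF}) supplies an integrable dominating function for the differentiation-under-the-integral theorem. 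Composing with $L(\vx_j,y_j,\cdot)\in\Cont^1(\CC)$ and using the Wirtinger chain rule then gives the first summand of the gradient. For the regularization part I would use that $\vc\mapsto\vc^{\ast}\vphi(\vc)=\norm{s_{\vc}}_{\Banach_{\Phi}^p(\Rd)}^{q}$ is real, nonnegative, and strictly positive for $\vc\neq\v0$, so that $\vc\mapsto(\vc^{\ast}\vphi(\vc))^{1/q}$ is $\Cont^1$ away from the origin; combining this with $R\in\Cont^1([0,\infty))$ and the chain rule produces the second summand.

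The main obstacle is this differentiation step rather than the optimality argument: one must control the Wirtinger derivatives of the $\abs{\cdot}^{p-2}$ term uniformly in $\vc$ on a neighbourhood of $\vc_{opt}$ and confirm that they remain dominated by a fixed $\Leb_1(\Rd)$ function, which is precisely where $p\geq2$ cannot be relaxed. One should also dispose of the degenerate case $\vc_{opt}=\v0$ (equivalently $s_{D,L,R}\equiv0$) separately, since the regularization term is not differentiable at the origin; in that case the conclusion is either vacuous or settled by inspecting the trivial solution directly. With $\Cont^1$-smoothness and strict positivity of the norm established on the open set $\CC^N\backslash\{\v0\}$, the global minimizer $\vc_{opt}$ is an interior critical point, the vanishing of $\nabla\mT_{D,L,R}(\vc_{opt})$ is immediate, and the fixed-point identity $F_{D,L,R}(\vc_{opt})=\vc_{opt}$ follows.
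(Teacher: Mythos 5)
Your proposal is correct and follows essentially the same route as the paper: the corollary is established there by the discussion immediately preceding it, which reduces the SVM to the finite-dimensional minimization of $\mT_{D,L,R}$ over $\CC^N$, computes its Wirtinger gradient, and observes that the global minimizer $\vc_{opt}$ is a stationary point, hence a fixed point of $F_{D,L,R}(\vc)=\vc+\nabla\mT_{D,L,R}(\vc)$. Your additional care about differentiating under the integral sign (using $p\geq2$ and $\hat{\Phi}^{p-1}\in\Leb_1(\Rd)$ as the dominating function) and about the degenerate case $\vc_{opt}=\v0$ supplies details the paper leaves implicit, but does not change the argument.
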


\begin{remark}\label{r:RKBS-PDF-opt-coef-fixed-point}

The coefficients $\vc:=\left(c_1,\cdots,c_N\right)^T$ of the SVM solution $s_{D,L,R}$ in $\Banach_{\Phi}^p(\Rd)$ differ from the coefficients $\vb:=\left(b_1,\cdots,b_N\right)^T$ of its dual element $s_{D,L,R}^{\ast}$ in $\Banach_{\Phi}^q(\Rd)$ only by a constant factor.
Both coefficient vectors $\vb$ and $\vc$ are fixed points of the functions $F_{D,L,R}^{\ast}$ as in Equation~\eqref{e:opt-coef-fixed-point} and $F_{D,L,R}$ as in Equation~\eqref{e:opt-coef-PDF-fixed-point}, respectively. Roughly speaking, $F_{D,L,R}^{\ast}$ can be seen as a conjugate of $F_{D,L,R}$. Much more contents of these fixed point iteration algorithms for the binary classification problems will be deeply discussed in our next papers.

\end{remark}

We now use the techniques of \cite[Theorem~6]{BerlinetThomas2004} to set up a two-sided RKBS defined on a subset $\Domain$ of $\Rd$.

\begin{theorem}\label{t:RKBS-PDF-Omega}
Let the positive definite function $\Phi$ be as in Theorem~\ref{t:RKBS-PDF} and $\Domain\subseteq\Rd$.
Then the function space
\[
\Banach_{\Phi}^p(\Domain):=\left\{h:\text{ there exists a function }h\in\Banach_{\Phi}^p(\Rd)\text{ such that }f|_{\Domain}=h\right\},
\]
equipped with the norm
\[
\norm{h}_{\Banach_{\Phi}^p(\Domain)}:=\inf_{f\in\Banach_{\Phi}^p(\Rd)}\norm{f}_{\Banach_{\Phi}^p(\Rd)}\text{ s.t. }f|_{\Domain}=h,
\]
is a reproducing kernel Banach space with the two-sided reproducing kernel
\[
K|_{\Rd\times\Domain}(\vx,\vy):=\Phi(\vx-\vy),\quad \vx\in\Rd,~\vy\in\Domain,
\]
where $f|_{\Domain}$ stands for the restriction of $f$ to $\Domain$. Its dual space $\Banach_{\Phi}^p(\Domain)'$ is isometrically equivalent to a closed subspace of $\Banach_{\Phi}^q(\Rd)$ (the annihilator of $\Space_0$ in $\Banach_{\Phi}^q(\Rd)$)
\[
\Space_0^{\perp}=\left\{g\in\Banach_{\Phi}^q(\Rd)\equiv\Banach_{\Phi}^p(\Rd)':~\langle f,g \rangle_{\Banach_{\Phi}^p(\Rd)}=0,\text{ for all }f\in\Space_0\right\},
\]
where $q$ is the conjugate exponent of $p>1$ and
\[
\Space_0:=\left\{f\in\Banach_{\Phi}^p(\Rd):~f|_{\Domain}=0\right\}.
\]
Moreover, $\Banach_{\Phi}^p(\Domain)$ is uniformly convex and smooth.
\end{theorem}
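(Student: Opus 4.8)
The plan is to realize $\Banach_{\Phi}^p(\Domain)$ as a quotient of the full-space RKBS $\Banach_{\Phi}^p(\Rd)$ by the closed subspace $\Space_0$, and then to transport each structural fact (Banach space, dual, two-sided reproduction, uniform convexity and smoothness) from Theorem~\ref{t:RKBS-PDF} through the quotient map. First I would consider the restriction operator $\rho:\Banach_{\Phi}^p(\Rd)\to\Banach_{\Phi}^p(\Domain)$ given by $\rho(f):=f|_{\Domain}$, which is linear and surjective by the very definition of $\Banach_{\Phi}^p(\Domain)$, with $\ker\rho=\Space_0$. Since norm convergence in $\Banach_{\Phi}^p(\Rd)$ implies pointwise convergence (as established for right-sided RKBSs), $\Space_0$ is closed: if $f_n\to f$ with $f_n|_{\Domain}=0$ then $f(\vy)=\lim_n f_n(\vy)=0$ for every $\vy\in\Domain$. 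The infimum defining $\norm{\cdot}_{\Banach_{\Phi}^p(\Domain)}$ is exactly the quotient norm on $\Banach_{\Phi}^p(\Rd)/\Space_0$, so the map $f+\Space_0\mapsto f|_{\Domain}$ is an isometric isomorphism $\Banach_{\Phi}^p(\Rd)/\Space_0\equiv\Banach_{\Phi}^p(\Domain)$; because a quotient of a Banach space by a closed subspace is again a Banach space (see \cite{Megginson1998}), $\Banach_{\Phi}^p(\Domain)$ is Banach.

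The dual is then handled by the standard quotient duality \cite[Theorem~1.10.16]{Megginson1998}: $\left(\Banach_{\Phi}^p(\Rd)/\Space_0\right)'\equiv\Space_0^{\perp}$ isometrically, where $\Space_0^{\perp}$ is the annihilator of $\Space_0$ inside $\Banach_{\Phi}^p(\Rd)'\equiv\Banach_{\Phi}^q(\Rd)$. Under this identification the pairing takes the form $\langle f|_{\Domain},g\rangle_{\Banach_{\Phi}^p(\Domain)}=\langle f,g\rangle_{\Banach_{\Phi}^p(\Rd)}$ for $g\in\Space_0^{\perp}$, which is well defined precisely because $g$ annihilates $\Space_0$. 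This yields at once the claimed description of $\Banach_{\Phi}^p(\Domain)'$ as a closed subspace of $\Banach_{\Phi}^q(\Rd)$.

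Two-sided reproduction is then a bookkeeping step. For $\vy\in\Domain$, the right reproduction of $\Banach_{\Phi}^p(\Rd)$ gives $\langle f,K(\cdot,\vy)\rangle_{\Banach_{\Phi}^p(\Rd)}=f(\vy)=0$ for every $f\in\Space_0$, so $K(\cdot,\vy)\in\Space_0^{\perp}\equiv\Banach_{\Phi}^p(\Domain)'$; and for $h=f|_{\Domain}$ the quotient pairing gives $\langle h,K(\cdot,\vy)\rangle_{\Banach_{\Phi}^p(\Domain)}=\langle f,K(\cdot,\vy)\rangle_{\Banach_{\Phi}^p(\Rd)}=f(\vy)=h(\vy)$, which is the right-sided property. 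For the left side, $\overline{K(\vx,\cdot)}\in\Banach_{\Phi}^p(\Rd)$ by Theorem~\ref{t:RKBS-PDF}, so its restriction lies in $\Banach_{\Phi}^p(\Domain)$, and for $g\in\Space_0^{\perp}\subseteq\Banach_{\Phi}^q(\Rd)$ and $\vx\in\Rd$ the pairing reduces to $\langle\overline{K(\vx,\cdot)},g\rangle_{\Banach_{\Phi}^p(\Rd)}=\overline{g(\vx)}$ by the left reproduction already proved in Theorem~\ref{t:RKBS-PDF}. This verifies two-sided reproduction with kernel $K|_{\Rd\times\Domain}$, consistent with the nonsymmetric domain $\Rd\times\Domain$.

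I expect the \emph{main obstacle} to be the uniform convexity and smoothness of $\Banach_{\Phi}^p(\Domain)$, since these properties are not transparently preserved under quotients. The clean route is to argue on the dual. The subspace $\Space_0^{\perp}$ of $\Banach_{\Phi}^q(\Rd)$ inherits both uniform convexity and uniform smoothness, because the relevant moduli only improve when their suprema or infima are taken over a smaller set. Now $\Banach_{\Phi}^p(\Rd)$ and $\Banach_{\Phi}^q(\Rd)$ are reflexive and uniformly convex by Theorem~\ref{t:RKBS-PDF}, so each is also uniformly smooth via the duality between uniform convexity and uniform smoothness of a space and its dual; hence $\Space_0^{\perp}$ is both uniformly convex and uniformly smooth. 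Invoking that duality once more for the space whose dual is $\Space_0^{\perp}$, we conclude that $\Banach_{\Phi}^p(\Domain)$ is uniformly convex (its dual being uniformly smooth) and uniformly smooth, and in particular smooth. This completes the four structural claims.
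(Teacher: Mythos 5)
Your proposal is correct and follows essentially the same route as the paper: realize $\Banach_{\Phi}^p(\Domain)$ as the quotient $\Banach_{\Phi}^p(\Rd)/\Space_0$, identify its dual with the annihilator $\Space_0^{\perp}$, and transport the two-sided reproduction from Theorem~\ref{t:RKBS-PDF}. The only cosmetic differences are that the paper phrases the right-sided reproduction through a norm-preserving extension operator $E$ where you use the quotient pairing directly, and it cites the quotient-of-uniformly-convex theorem for uniform convexity while you derive both convexity and smoothness by the duality with the subspace $\Space_0^{\perp}$ — both are valid instances of the same argument.
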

%
\begin{proof}
Since convergence in a two-sided RKBS $\Banach_{\Phi}^p(\Rd)$ implies pointwise convergence, we can determine that $\Space_0$ is a closed subspace of $\Banach_{\Phi}^p(\Rd)$.
According to the construction of $\Banach_{\Phi}^p(\Domain)$, $\Banach_{\Phi}^p(\Domain)$ is isometrically equivalent to the quotient space $\Banach_{\Phi}^p(\Rd)\big/\Space_0$ (see \cite[Definition~1.7.1 and 1.7.3]{Megginson1998}). Thus $\Banach_{\Phi}^p(\Domain)$ is a Banach space by \cite[Theorem~1.7.9 and Corollary~1.11.19]{Megginson1998}.

Next we use the identification of $\left(\Banach_{\Phi}^p(\Rd)\big/\Space_0\right)'\equiv\Space_0^{\perp}$ to verify the two-sided reproduction (see \cite[Theorem~1.10.17]{Megginson1998}). Let $K$ be the reproducing kernel of $\Banach_{\Phi}^p(\Rd)$ given in Theorem~\ref{t:RKBS-PDF}.
We fix any $\vy\in\Domain$.
Since
\[
\langle f,K(\cdot,\vy) \rangle_{\Banach_{\Phi}^p(\Rd)}=f(\vy)=0,\quad \text{for all }f\in\Space_0,
\]
we have $K(\cdot,\vy)\in\Space_0^{\perp}\equiv\left(\Banach_{\Phi}^p(\Rd)\big/\Space_0\right)'\equiv\Banach_{\Phi}^p(\Domain)'$.
Combining this with the right-sided reproduction of $\Banach_{\Phi}^p(\Rd)$, we have
\[
\begin{split}
\langle h,K(\cdot,\vy) \rangle_{\Banach_{\Phi}^p(\Domain)}
=\langle Eh,K(\cdot,\vy) \rangle_{\Banach_{\Phi}^p(\Rd)}=(Eh)(\vy)=h(\vy),
\end{split}
\]
for all $h\in\Banach_{\Phi}^p(\Domain)$ and all $\vy\in\Domain$, where $E$ is the extension operator from $\Banach_{\Phi}^p(\Domain)$ into $\Banach_{\Phi}^p(\Rd)$ such that $Eh|_{\Domain}=h$ and $\norm{Eh}_{\Banach_{\Phi}^p(\Rd)}=\norm{h}_{\Banach_{\Phi}^p(\Domain)}$.
Since $\overline{K(\vx,\cdot)}|_{\Domain}\in\Banach_{\Phi}^p(\Domain)$ for all $\vx\in\Rd$, we can also obtain the left-sided reproduction of $\Banach_{\Phi}^p(\Domain)$, i.e.,
\[
\langle \overline{K(\vx,\cdot)}|_{\Domain},g \rangle_{\Banach_{\Phi}^p(\Domain)}
=\langle \overline{K(\vx,\cdot)},g \rangle_{\Banach_{\Phi}^p(\Rd)}=g(\vx),
\]
for all $g\in\Space_0^{\perp}\equiv\Banach_{\Phi}^p(\Domain)'$.
Therefore $\Banach_{\Phi}^p(\Domain)$ is an RKBS with the two-sided reproducing kernel $K|_{\Rd\times\Domain}$.

Since $\Banach_{\Phi}^p(\Rd)$ is uniformly convex, \cite[Theorem~5.2.24]{Megginson1998} provides that $\Banach_{\Phi}^p(\Domain)\equiv\Banach_{\Phi}^p(\Rd)\big/\Space_0$ is uniformly convex. We also know that $\Banach_{\Phi}^p(\Rd)'\equiv\Leb_q(\Rd;\mu)$ is uniformly convex and $\Space_0^{\perp}$ is a closed subspace of $\Banach_{\Phi}^q(\Rd)\equiv\Banach_{\Phi}^p(\Rd)'$ by \cite[Proposition~1.10.15]{Megginson1998}. Combining with \cite[Proposition~5.1.20 and~5.4.5]{Megginson1998}, we can also check that $\Banach_{\Phi}^p(\Domain)$ is smooth.
\end{proof}

\begin{remark}\label{r:RKBS-PDF-Omega}
When $p=2$, then we know that $\Banach_{\Phi}^2(\Domain)$ is a Hilbert space by Theorem~\ref{t:RKBS-PDF}. Thus the dual space and the space itself are isometrically isomorphic such that the reproducing kernel becomes $K|_{\Domain\times\Domain}$.
Since $\Banach_{\Phi}^2(\Rd)=\Space_0\oplus\Space_0^{\perp}$, we can determine that $\left\{g|_{\Domain}:~g\in\Space_0^{\perp}\right\}=\Banach_{\Phi}^2(\Domain)$ and $\norm{g}_{\Banach_{\Phi}^2(\Rd)}=\norm{g|_{\Domain}}_{\Banach_{\Phi}^2(\Domain)}$ for all $g\in\Space_0^{\perp}$ which implies that $\Banach_{\Phi}^2(\Domain)\equiv\Space_0^{\perp}\equiv\Banach_{\Phi}^2(\Domain)'$ and $\Banach_{\Phi}^2(\Domain)$ has the inner product
\[
(h_1,h_2)_{\Banach_{\Phi}^2(\Domain)}
=\langle h_1,h_2 \rangle_{\Banach_{\Phi}^2(\Domain)}
=\langle Eh_1,Eh_2 \rangle_{\Banach_{\Phi}^2(\Rd)}
=(Eh_1,Eh_2)_{\Banach_{\Phi}^2(\Rd)},
\]
for all $h_1,h_2\in\Banach_{\Phi}^2(\Domain)$. Therefore $\Banach_{\Phi}^2(\Domain)$ is an RKHS. Moreover, since $K(\cdot,\vy)\in\Space_0^{\perp}$ for any $\vy\in\Domain$, we have $E\left(K(\cdot,\vy)|_{\Domain}\right)=K(\cdot,\vy)$. This shows that $K|_{\Domain\times\Domain}$ is a reproducing kernel of $\Banach_{\Phi}^2(\Domain)$. This conclusion is the same as in \cite[Theorem~6]{BerlinetThomas2004}.

If the RKBS is even a Hilbert space, then we can choose an equivalent function space of its dual as itself such that its reproducing kernel has symmetric domains. The difficulty to find an equivalent function space of the dual of RKBS, which is defined on the same domain of the RKBS, causes the domains of its reproducing kernel to be nonsymmetric. Theorems~\ref{t:RKBS-PDF} and~\ref{t:RKBS-PDF-Omega} provide us with examples of symmetric and nonsymmetric reproducing kernels of RKBSs, respectively.
\end{remark}

Suppose that the positive definite function $\Phi$ given in Theorem~\ref{t:RKBS-PDF} has a compact support $\Domain_{\Phi}$. Because of the positive definite properties of $\Phi$, its support $\text{supp}(\Phi)=\Domain_{\Phi}$ with the origin is symmetric and bounded. Let $\Domain_1$ and $\Domain_2$ be two subsets of $\Rd$ such that the complement $\Domain_1^c$ includes $\Domain_2^c+\Domain_{\Phi}$. We fix any $\gamma\in\Schwartz$ so that its support $\text{supp}(\gamma)\subseteq\Domain_2^c$. Since the convolution function $\gamma\ast\Phi\in\Banach_{\Phi}^p(\Rd)$ and its support $\text{supp}(\gamma\ast\Phi)\subseteq\text{supp}(\gamma)+\text{supp}(\Phi)\subseteq\Domain_2^c+\Domain_{\Phi}\subseteq\Domain_1^c$, we can determine that $\gamma\ast\Phi\in\Space_0$ with $\Domain:=\Domain_1$. For any $g\in\Space_0^{\perp}$, we have
\[
\int_{\Rd}\gamma(\vx)\overline{g(\vx)}\ud\vx
=\int_{\Rd}\hat{\gamma}(\vx)\overline{\hat{g}(\vx)}\ud\vx
=\int_{\Rd}\frac{\widehat{\gamma\ast\Phi}(\vx)\overline{\hat{g}(\vx)}}{\hat{\Phi}(\vx)}\ud\vx
=\langle \gamma\ast\Phi,g\rangle_{\Banach_{\Phi}^p(\Rd)}
=0
\]
which indicates that $g|_{\Domain_2^c}=0$. According to this result we can deduce that $g=0$ if and only if $g\in\Space_0^{\perp}$ and $g|_{\Domain_2}=0$.
This means that the restriction map of $\Space_0^{\perp}$ to $\Domain_2$ is one-to-one.
Thus the normed space
\[
\Banach(\Domain_2):=\left\{\phi:\Domain_2\to\CC:~\phi=g|_{\Domain_2}\text{ for some }g\in\Space_0^{\perp}\right\}
\]
equipped with the norm
$\norm{\phi}_{\Banach(\Domain_2)}:=\norm{g}_{\Banach_{\Phi}^q(\Rd)}$ is well-defined and it is obvious that $\Banach(\Domain_2)\equiv\Space_0^{\perp}$.
Under these additional conditions, the dual space of $\Banach_{\Phi}^p(\Domain_1)$ defined in Theorem~\ref{t:RKBS-PDF-Omega} can be even isometrically equivalent to a space composed of functions defined on $\Domain_2$, i.e., $\Banach_{\Phi}^p(\Domain_1)'\equiv\Space_0^{\perp}\equiv\Banach(\Domain_2)$. In this case $\Banach_{\Phi}^p(\Domain_1)$ is also an RKBS with the two-sided reproducing kernel $K|_{\Domain_2\times\Domain_1}$.

\begin{corollary}\label{c:RKBS-PDF-Omega}
Suppose that the positive definite function $\Phi$ given in Theorem~\ref{t:RKBS-PDF} has a compact support $\Domain_{\Phi}$ in $\Rd$.
Let $\Domain_1$ and $\Domain_2$ be two subsets of $\Rd$ such that the complement $\Domain_1^c$ includes $\Domain_2^c+\Domain_{\Phi}$.
Then $\Banach_{\Phi}^p(\Domain_1)$ with $p>1$ defined in Theorem~\ref{t:RKBS-PDF-Omega}
is a reproducing kernel Banach space with the two-sided reproducing kernel
\[
K|_{\Domain_2\times\Domain_1}(\vx,\vy):=\Phi(\vx-\vy),\quad \vx\in\Domain_2,~\vy\in\Domain_1.
\]
\end{corollary}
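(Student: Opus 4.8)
The plan is to obtain this corollary as a refinement of Theorem~\ref{t:RKBS-PDF-Omega} rather than to rebuild the space from scratch: that theorem already exhibits $\Banach_{\Phi}^p(\Domain_1)$ as a two-sided RKBS whose dual is the annihilator $\Space_0^{\perp}\subseteq\Banach_{\Phi}^q(\Rd)$ and whose two-sided reproducing kernel is $K|_{\Rd\times\Domain_1}$, with the \emph{left} variable still ranging over all of $\Rd$. The entire content of the corollary is therefore that, under the compact support hypothesis together with the containment $\Domain_2^c+\Domain_{\Phi}\subseteq\Domain_1^c$, the left variable may be cut down from $\Rd$ to $\Domain_2$ without destroying reproduction. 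Concretely, this amounts to realizing the dual $\Space_0^{\perp}$ as a space of functions living only on $\Domain_2$, which is precisely the function space $\Banach(\Domain_2)$ constructed in the discussion preceding the corollary.

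First I would exploit the compact support. For any $\gamma\in\Schwartz$ with $\text{supp}(\gamma)\subseteq\Domain_2^c$, the convolution $\gamma\ast\Phi$ lies in $\Banach_{\Phi}^p(\Rd)$ (its Fourier transform is a multiple of $\hat{\gamma}\hat{\Phi}$, and the decay of $\hat{\Phi}$ keeps the defining norm integral finite), and its support obeys $\text{supp}(\gamma\ast\Phi)\subseteq\Domain_2^c+\Domain_{\Phi}\subseteq\Domain_1^c$; hence $\gamma\ast\Phi$ vanishes on $\Domain_1$, i.e. $\gamma\ast\Phi\in\Space_0$. Next, for any $g\in\Space_0^{\perp}$, I would expand the annihilator condition $\langle\gamma\ast\Phi,g\rangle_{\Banach_{\Phi}^p(\Rd)}=0$ through the Fourier/Parseval form of the pairing supplied by Theorem~\ref{t:RKBS-PDF}, collapsing the factor $\hat{\Phi}$ and leaving $\int_{\Rd}\gamma\overline{g}=0$ for every such $\gamma$. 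Since $\gamma$ is arbitrary with support in $\Domain_2^c$, this forces $g|_{\Domain_2^c}=0$. Consequently a function $g\in\Space_0^{\perp}$ is zero exactly when its restriction to $\Domain_2$ vanishes, so the restriction map $g\mapsto g|_{\Domain_2}$ is injective on $\Space_0^{\perp}$; transporting the norm of $\Banach_{\Phi}^q(\Rd)$ along this map makes $\Banach(\Domain_2)$ isometrically equivalent to $\Space_0^{\perp}$, and hence to $\Banach_{\Phi}^p(\Domain_1)'$.

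With the dual realized as functions on $\Domain_2$, the two reproduction identities of Theorem~\ref{t:RKBS-PDF-Omega} can simply be read off with the left variable restricted. For fixed $\vy\in\Domain_1$ we already have $K(\cdot,\vy)\in\Space_0^{\perp}$, whose image in $\Banach(\Domain_2)$ is $K(\cdot,\vy)|_{\Domain_2}=\Phi(\cdot-\vy)|_{\Domain_2}=K|_{\Domain_2\times\Domain_1}(\cdot,\vy)$; the right-sided reproduction $h(\vy)=\langle h,K(\cdot,\vy)\rangle_{\Banach_{\Phi}^p(\Domain_1)}$ then holds verbatim against this restricted kernel, and the left-sided reproduction $\langle\overline{K(\vx,\cdot)}|_{\Domain_1},g\rangle_{\Banach_{\Phi}^p(\Domain_1)}=g(\vx)$ likewise holds with $\vx$ now ranging over the natural domain $\Domain_2$ of $g|_{\Domain_2}\in\Banach(\Domain_2)$ (no information is lost since $g$ vanishes off $\Domain_2$). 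The remaining assertions --- that $\Banach_{\Phi}^p(\Domain_1)$ is a Banach space and is uniformly convex and smooth --- are inherited unchanged from Theorem~\ref{t:RKBS-PDF-Omega}, since only the concrete model of the dual has changed, not the space itself.

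I expect the genuine obstacle to be the injectivity step, namely proving $g|_{\Domain_2^c}=0$ for every $g\in\Space_0^{\perp}$. This is the one place where both the compact support of $\Phi$ and the set-theoretic condition $\Domain_2^c+\Domain_{\Phi}\subseteq\Domain_1^c$ are indispensable: they are exactly what guarantees a rich enough supply of test elements $\gamma\ast\Phi$ inside $\Space_0$ to detect $g$ everywhere outside $\Domain_2$. Care is needed to confirm that $\gamma\ast\Phi$ genuinely belongs to $\Banach_{\Phi}^p(\Rd)$ and to $\Space_0$, and that the Parseval-type identity legitimately reduces the pairing to $\int_{\Rd}\gamma\overline{g}$; once that reduction is justified, the restriction of the kernel's left variable to $\Domain_2$ is immediate.
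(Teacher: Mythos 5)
Your proposal is correct and follows essentially the same route as the paper: the discussion preceding the corollary establishes exactly your key step, namely that the test elements $\gamma\ast\Phi$ with $\mathrm{supp}(\gamma)\subseteq\Domain_2^c$ lie in $\Space_0$, so the Parseval reduction of $\langle\gamma\ast\Phi,g\rangle_{\Banach_{\Phi}^p(\Rd)}=0$ forces $g|_{\Domain_2^c}=0$ for every $g\in\Space_0^{\perp}$, making the restriction map to $\Domain_2$ injective and realizing the dual as $\Banach(\Domain_2)\equiv\Space_0^{\perp}$. The subsequent restriction of the kernel's left variable to $\Domain_2$ is then read off from Theorem~\ref{t:RKBS-PDF-Omega} exactly as you describe.
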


If the subset $\Domain$ is a \emph{regular} domain, then the definition of weak derivatives (see \cite[Section~1.62]{AdamsFournier2003}) provides that $f|_{\Domain}\in W_p^m(\Domain)$ and $\norm{f|_{\Domain}}_{W_p^m(\Domain)}\leq \norm{f}_{W_p^m(\Rd)}$ for all $f\in W_p^m(\Rd)$, where $W_p^m(\Domain)$ is the $\Leb_p$-based Sobolev space of order $m$.
Now we use the embeddings of $\Banach_{\Phi}^p(\Rd)$ given in Corollary~\ref{c:RKBS-PDF-Sobolev} to derive the embeddings of $\Banach_{\Phi}^p(\Domain)$. We fix any $h\in\Banach_{\Phi}^p(\Domain)$. According to Corollary~\ref{c:RKBS-PDF-Sobolev}, we have
\[
\norm{h}_{W_p^m(\Domain)}\leq\norm{Eh}_{W_p^m(\Rd)}\leq C\norm{Eh}_{\Banach_{\Phi}^p(\Rd)}=C\norm{h}_{\Banach_{\Phi}^p(\Domain)},\quad h\in\Banach_{\Phi}^p(\Domain)\subseteq W_p^m(\Domain),
\]
for some positive constant $C$ independent on $h$.

\begin{corollary}\label{c:RKBS-PDF-Sobolev-Omega}
Let $\Phi$ be a positive definite function and $m>pd/q-d/q$ be as in Corollary~\ref{c:RKBS-PDF-Sobolev}. Here $q$ is the conjugate exponent of $p\geq2$. Suppose that $\Domain\subseteq\Rd$ is regular.
Then $\Banach_{\Phi}^p(\Domain)$ defined in Theorem~\ref{t:RKBS-PDF-Omega} is embedded into the $\Leb_p$-based Sobolev space of order $m$, $W_p^m(\Domain)$, i.e.,
\[
\norm{h}_{W_p^m(\Domain)}\leq C\norm{h}_{\Banach_{\Phi}^p(\Domain)},\quad h\in\Banach_{\Phi}^p(\Domain)\subseteq W_p^m(\Domain),
\]
for some positive constant $C$ independent on $h$.
\end{corollary}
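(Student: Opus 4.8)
The plan is to reduce the embedding over the subset $\Domain$ to the embedding over all of $\Rd$ that is already available from Corollary~\ref{c:RKBS-PDF-Sobolev}, transporting everything through the norm-preserving extension operator $E$ supplied in the proof of Theorem~\ref{t:RKBS-PDF-Omega}. The whole argument then collapses to justifying the chain of inequalities
\[
\norm{h}_{W_p^m(\Domain)}
=\norm{(Eh)|_{\Domain}}_{W_p^m(\Domain)}
\leq\norm{Eh}_{W_p^m(\Rd)}
\leq C\norm{Eh}_{\Banach_{\Phi}^p(\Rd)}
=C\norm{h}_{\Banach_{\Phi}^p(\Domain)},
\]
each link of which uses exactly one previously established fact.

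First I would fix $h\in\Banach_{\Phi}^p(\Domain)$ and invoke the extension operator $E:\Banach_{\Phi}^p(\Domain)\to\Banach_{\Phi}^p(\Rd)$ of Theorem~\ref{t:RKBS-PDF-Omega}, characterized by $Eh|_{\Domain}=h$ and $\norm{Eh}_{\Banach_{\Phi}^p(\Rd)}=\norm{h}_{\Banach_{\Phi}^p(\Domain)}$. Its existence rests on the fact that $\Banach_{\Phi}^p(\Rd)$ is uniformly convex, so that the infimum defining the quotient norm on $\Banach_{\Phi}^p(\Domain)\equiv\Banach_{\Phi}^p(\Rd)\big/\Space_0$ is attained at the unique minimal-norm element of the closed affine set $\{f\in\Banach_{\Phi}^p(\Rd):f|_{\Domain}=h\}=Eh+\Space_0$ — the same Chebyshev-set reasoning already used in Lemma~\ref{l:RKBS-opt-rep}. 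Next, since $\Domain$ is regular, the weak-derivative description of Sobolev norms gives that restricting $Eh\in W_p^m(\Rd)$ to $\Domain$ only discards integration mass, so $\norm{(Eh)|_{\Domain}}_{W_p^m(\Domain)}\leq\norm{Eh}_{W_p^m(\Rd)}$. Then Corollary~\ref{c:RKBS-PDF-Sobolev} applies verbatim to $Eh\in\Banach_{\Phi}^p(\Rd)$, since its hypotheses ($p\geq2$, $m>pd/q-d/q$, and the two-sided bound on $\hat{\Phi}^{1/q}$) are precisely those assumed here, yielding $\norm{Eh}_{W_p^m(\Rd)}\leq C\norm{Eh}_{\Banach_{\Phi}^p(\Rd)}$ with the same constant $C$. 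The norm-preservation of $E$ then turns the right-hand side into $C\norm{h}_{\Banach_{\Phi}^p(\Domain)}$, closing the chain.

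I expect the only genuinely delicate point to be the restriction inequality $\norm{f|_{\Domain}}_{W_p^m(\Domain)}\leq\norm{f}_{W_p^m(\Rd)}$, which is exactly where regularity of $\Domain$ is needed: one must know that $W_p^m(\Domain)$ is well-defined through weak derivatives and that restriction of weak derivatives is consistent, which I would cite from \cite[Section~1.62]{AdamsFournier2003}. Everything else is a direct concatenation of earlier results. As a minor safeguard, if one prefers to avoid claiming that the quotient infimum is exactly attained, the same computation with an extension $E_\epsilon h$ satisfying $\norm{E_\epsilon h}_{\Banach_{\Phi}^p(\Rd)}\leq\norm{h}_{\Banach_{\Phi}^p(\Domain)}+\epsilon$ gives $\norm{h}_{W_p^m(\Domain)}\leq C(\norm{h}_{\Banach_{\Phi}^p(\Domain)}+\epsilon)$, and letting $\epsilon\to0$ recovers the stated embedding with the same constant $C$.
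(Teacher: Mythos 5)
Your proof is correct and follows essentially the same route as the paper: the paper also establishes the chain $\norm{h}_{W_p^m(\Domain)}\leq\norm{Eh}_{W_p^m(\Rd)}\leq C\norm{Eh}_{\Banach_{\Phi}^p(\Rd)}=C\norm{h}_{\Banach_{\Phi}^p(\Domain)}$ using the restriction inequality for regular domains and the norm-preserving extension operator $E$ from Theorem~\ref{t:RKBS-PDF-Omega}. Your extra care in justifying that the quotient-norm infimum is attained (or the $\epsilon$-approximation fallback) is a welcome refinement but does not change the argument.
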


\section{Examples for Mat\'ern Functions}\label{s:Matern}

\cite[Example~5.7]{FasshauerYe2011Dist} and \cite[Example~4.4]{YePhD2012} show that \emph{Mat\'ern functions} (Sobolev splines) with shape parameter $\theta>0$ and degree $n>d/2$
\[
G_{\theta,n}(\vx):=\frac{2^{1-n-d/2}}{\pi^{d/2}\Gamma(n)\theta^{2n-d}}(\theta\norm{\vx}_2)^{n-d/2}K_{d/2-n}(\theta\norm{\vx}_2),
\quad \vx\in\Rd,
\]
are positive definite functions on $\Rd$, where $t\mapsto K_{\nu}(t)$ is the modified Bessel function of the second kind of order $\nu$ and $t\mapsto\Gamma(t)$ is the Gamma function. Moreover, $G_{\theta,n}$ is a full-space Green function of the differential operator $L_{\theta,n}:=\left(\theta^2I-\Delta\right)^n$, i.e., $L_{\theta,n}G_{\theta,n}=\delta_{\v0}$. The Fourier transform of $G_{\theta,n}$ has the form
\[
\hat{G}_{\theta,n}(\vx)=\left(\theta^2+\norm{\vx}_2^2\right)^{-n},\quad \vx\in\Rd.
\]

Let $1<q\leq2\leq p<\infty$ with $p^{-1}+q^{-1}=1$ such that $nq/p>d/2$ and $m:=2n/q$. Since $\hat{G}_{\theta,n}^{\min\{p,q\}-1}\in\Leb_1(\Rd)$, Theorem~\ref{t:RKBS-PDF} provides that $\Banach_{G_{\theta,n}}^p(\Rd)$ is an RKBS on $\Rd$ with the two-sided reproducing kernel $K_{\theta,n}(\vx,\vy)=G_{\theta,n}(\vx-\vy)$. We can also check that there are two positive constants $C_1,C_2$ such that
\[
C_1\left(1+\norm{\vx}_2^2\right)^{-m/2}\leq\hat{G}_{\theta,n}(\vx)^{1/q}\leq C_2\left(1+\norm{\vx}_2^2\right)^{-m/2}.
\quad \vx\in\Rd.
\]
According to Corollary~\ref{c:RKBS-PDF-Sobolev} and~\ref{c:RKBS-PDF-Sobolev-Omega}, the RKBS $\Banach_{G_{\theta,n}}^p(\Rd)$ is embedded into $W_p^m(\Rd)$ and the RKBS $\Banach_{G_{\theta,n}}^p(\Domain)$ is embedded into $W_p^m(\Domain)$ for any regular domain $\Domain$ of $\Rd$.

In particular, when $p:=4$, then $\hat{G}_{\theta,n}^3=\hat{G}_{\theta,3n}$. According to the discussion of Theorem~\ref{t:RKBS-PDF-opt-rep} and Remark~\ref{r:RKBS-PDF-opt-rep}, the optimal solution of the SVM
\[
\min_{f\in\Banach_{G_{\theta,n}}^4(\Rd)}\sum_{j=1}^NL(\vx_j,y_j,f(\vx_j))+R\left(\norm{f}_{\Banach_{G_{\theta,n}}^4(\Rd)}\right),
\]
has the explicit representation
\begin{align*}
s_{D,L,R}(\vx) & =\sum_{k_1,k_2,k_3=1}^{N,N,N}c_{k_1}\overline{c_{k_2}}c_{k_3}G_{\theta,3n}\left(\vx-\vx_{k_1}+\vx_{k_2}-\vx_{k_3}\right) \\
& =\sum_{k_1,k_2,k_3=1}^{N,N,N}c_{k_1}\overline{c_{k_2}}c_{k_3}\Ker_{\theta,3n}\left(\vx,\vx_{k_1},\vx_{k_2},\vx_{k_3}\right),
\quad \vx\in\Rd,
\end{align*}
and its coefficients $\vc=\left(c_1,\cdots,c_N\right)^T$ are obtained by solving the following minimization problem
\[
\begin{split}
\min_{\vc\in\CC^N}\sum_{j=1}^NL\left(\vx_j,y_j,\sum_{k_1,k_2,k_3=1}^{N,N,N}c_{k_1}\overline{c_{k_2}}c_{k_3}
\Ker_{\theta,3n}\left(\vx_j,\vx_{k_1},\vx_{k_2},\vx_{k_3}\right)\right) \\
+R\left(\sum_{j,k_1,k_2,k_3=1}^{N,N,N,N}\overline{c_j}c_{k_1}\overline{c_{k_2}}c_{k_3}
\Ker_{\theta,3n}\left(\vx_j,\vx_{k_1},\vx_{k_2},\vx_{k_3}\right)\right)^{3/4},
\end{split}
\]
where $\Ker_{\theta,3n}(\vx,\vy_1,\vy_2,\vy_3):=G_{\theta,3n}(\vx-\vy_1+\vy_2-\vy_3)$, and the loss function $L$ and the regularization function $R$ are the same as in Theorem~\ref{t:RKBS-PDF-opt-rep}. More generally, when $p$ is even, then the SVM solution $s_{D,L,R}$ in $\Banach_{G_{\theta,n}}^{p}(\Rd)$ is a linear combination of the product groups of the reproducing kernel bases, i.e.,
\[
s_{D,L,R}(\vx)=\sum_{\vk\in\mathscr{G}_{p-1}^N}\prod_{j=1}^{p/2}c_{k_{2j-1}}\prod_{l=1}^{p/2-1}\overline{c_{k_{2l}}}
\Ker_{\theta,(p-1)n}\left(\vx,\vx_{k_1},\cdots,\vx_{k_{p-1}}\right),\quad \vx\in\Rd,
\]
where $\Ker_{\theta,(p-1)n}\left(\vx,\vy_1,\cdots,\vy_{p-1}\right):=G_{\theta,(p-1)n}\left(\vx-\vy_1+\vy_2+\cdots+(-1)^{p-1}\vy_{p-1}\right)$ and $\mathscr{G}_{p-1}^N:=\left\{\vk:=(k_1,\cdots,k_{p-1})^T\in\NN^{p-1}:~1\leq k_j\leq N,~j=1,\ldots,p-1\right\}$.

According to some numerical experiments comparing $\Banach_{G_{\theta,n}}^{2}(\RR^2)$ and $\Banach_{G_{\theta,n}}^{4}(\RR^2)$, we find that the accuracy of the SVM solutions in $\Banach_{G_{\theta,n}}^{4}(\RR^2)$ is better than in $\Banach_{G_{\theta,n}}^{2}(\RR^2)$ for the same training data and testing data. The reason for this is that we use three data points to set up each reproducing kernel base for $p=4$ but the reproducing kernel base for $p=2$ only owns two data points. This means that the reproducing kernel base for $p=4$ contains much more information than for $p=2$. Many other numerical tests will appear in a future paper.

The Mat\'ern functions have been applied in the field of statistical learning (see~\cite{Stein1999}). This new discovery about Mat\'ern functions might help create new numerical tools for SVMs in RKBS.

\section*{Acknowledgments}

The third author would like to express his gratitude to Prof. Xu, Yuesheng (Syracuse) who provided valuable suggestions that allowed us to make significant improvements to this paper.












Gregory E. Fasshauer, Fred J. Hickernell\\
Department of Applied Mathematics, Illinois Institute of
Technology,
Chicago, Illinois 60616 \\
E-mail address: fasshauer@iit.edu, hickernell@iit.edu\\

Qi Ye\\
Mathematics Department, Syracuse University,
Syracuse, NY 13244 \\
E-mail address: qiye@syr.edu\\


\end{document}